\documentclass[us]{article} 
\usepackage[numbers, compress]{natbib}

\usepackage[accepted]{aistats2022}
\usepackage{url}
\usepackage{amsthm}
\usepackage{amssymb,dsfont,amsmath,amsthm,mathtools,natbib}
\usepackage{enumitem}
\usepackage{bbm}

\usepackage{algorithm}
\usepackage{algorithmic}
\usepackage{dcolumn}
\usepackage{tikz}
\usetikzlibrary{positioning}
\usepackage{booktabs}

\newtheorem{theorem}{Theorem}
\newtheorem{definition}{Definition}
\newtheorem{proposition}{Proposition}
\newtheorem{lemma}{Lemma}

\newtheorem{remark}{Remark}

\usepackage{xspace}
\newcolumntype{d}[1]{D{.}{.}{#1}}

\newcommand{\norm}[1]{|\!| #1 |\!|}
\def\R{\mathbb{R}}
\def\1{\mathbbm{1}}

\def\X{\mathbf{X}}
\def\u{\mathbf{u}}
\def\v{\mathbf{v}}
\def\y{\mathbf{y}}

\def\E{\mathbf{E}}

\def\s{\mathbf{s}}

\def\V{\mathbf{V}}

\newcommand{\Ex}{{\rm I\kern-.3em E}}

\def\cN{\mathcal{N}}

\def\cP{\mathcal{P}}

\usepackage[colorinlistoftodos,bordercolor=orange,backgroundcolor=orange!20,linecolor=orange,textsize=scriptsize]{todonotes}

\DeclareMathOperator{\complex}{complexity}

 \begin{document}

\twocolumn[

\aistatstitle{Statistical and Topological Properties  of Gaussian 
Smoothed Sliced Probability Divergences}

\aistatsauthor{
	Alain Rakotomamonjy$\dagger$\\
	Criteo AI Lab, Paris\\
	\texttt{alain.rakoto@insa-rouen.fr} \\
	\And
	Mokhtar Zahdi  El Alaya$\dagger$	 \\
	LMAC, Université Technologique de Compiègne  \\
	\texttt{mokhtarzahdi.alaya@gmail.com} \\
	\AND  
	Maxime Berar \\
	LITIS, Université de Rouen \\
	\texttt{maxime.beraro@univ-rouen.fr} \\
	\And
	Gilles Gasso \\
	LITIS, INSA de Rouen \\
	\texttt{gilles.gasso@insa-rouen.fr} \\
}
~\\~\\
 ]

\begin{abstract}
	Gaussian smoothed sliced Wasserstein distance has been recently introduced for comparing
	probability distributions, while preserving privacy on the data.
	It has been shown, in applications
	such as domain adaptation,  to  provide  performances similar to its non-private
	(non-smoothed) counter-part. However, the computational and statistical properties of such a
	metric is  not yet been well-established. In this paper, we analyze the theoretical properties
	of this distance as well as those of generalized versions denoted as Gaussian smoothed sliced divergences.
	We show that smoothing and slicing preserve  the metric property and the weak topology. We also
	provide results on the sample complexity of such divergences.  Since, the privacy level depends on the
	 amount of Gaussian smoothing, we analyze  the impact
	 of this parameter on the divergence.
	 We support our theoretical findings with
	 empirical studies of Gaussian smoothed and sliced version of Wassertein distance, Sinkhorn divergence
	 and maximum mean discrepancy (MMD). In the context of privacy-preserving domain adaptation, we confirm that
	  those Gaussian smoothed sliced Wasserstein and MMD divergences  perform very well while ensuring
	data privacy.
\end{abstract}

\section{Introduction}
\label{sec:back}

Divergences for comparing two distributions have been shown to be important
for achieving good performances in the contexts  of generative modeling~\cite{pmlr-v70-arjovsky17a,salimans2018improving}, domain adaptation~\cite{long2015learning,courty2016optimal,lee2019sliced}, and in computer vision~\citep{bonnel2011,solomon2015} among many more applications~\citep{klouri17,peyre2019COTnowpublisher}.
Examples of divergences that have
been proven to be useful for those tasks are maximum mean discrepancy~\cite{gretton2012kernel, long2015learning, sutherland2017generative}, Wasserstein
distance~\citep{monge1781,kantorovich1942,villani09optimal} or its variant the sliced Wasserstein distance (SWD)~\cite{bonneel2019spot,kolouri2019gensliced,nguyen2020distributional,kolouri2016slicedkernels}.

Sliced Wasserstein distance has the advantage of being computationally
efficient as it exploits a closed-form solution for distributions with support
on $\R$, by computing the expectation of one-dimensional random projections
of distributions in $\R^d$. Owing to this efficiency and the resulting
scalability, this distance has been successfully
applied in several applications ranging from generative models to domain adaptation~\cite{kolouri2018sliced,max-SW,wu2019sliced,lee2019sliced} 
and its statistical property has been well-studied \cite{nadjahi2020}.

A differentially private variant of sliced Wasserstein distance has been recently introduced in~\cite{pmlr-v139-rakotomamonjy21a}, 
for comparing distributions in relation to sensitive applications in which training data can not
be disclosed.  Privacy through a so-called Gaussian mechanism is induced by adding Gaussian noise to each 1D projection of each distribution, leading to the so-called Gaussian-smoothed sliced Wasserstein distance.  This relationship between Gaussian smoothing and privacy has also been 
mentioned by \citet{nietert2021smooth} as future work to address,
while they analyzed 
the structural and statistical behavior of Gaussian smoothed Wasserstein distances.

However, up to now, theoretical
properties of this Gaussian smoothed sliced Wasserstein distance   are not well fully understood except for its metric properties \cite{pmlr-v139-rakotomamonjy21a}.
In this work, we  investigate those theoretical properties and the one of
more general Gaussian smoothed sliced divergences. Indeed, given 
a base distance or divergence for distributions in $\R^d$, we can introduce
its related Gaussian smoothed sliced divergence.
Specifically, the theoretical properties of interest are the metric property and
the underlying topology. From a statistical point of view, we 
seek at understanding the relationship between the sample complexity of 
the base divergence and its Gaussian smoothed sliced version.
Regarding privacy, the role of the Gaussian  smoothing is 
of primary importance as it induces the privacy level achieved by the divergence. Hence, we also provide an analysis on its impact with respect to the standard deviation of the Gaussian noise.
For supporting our theoretical study, we provide some numerical experiments on 
toy problem, and we also provide some numerical study on 
domain adaptation illustrating how owing to the
topology induced by our metric, differential privacy comes
almost for free (without loss of performances) in this context.

The paper is organized as follows, after introducing the 
notations and some background in Section \ref{sec:back}, we detail
the topological properties of Gaussian smoothed sliced divergence in 
Section \ref{subsec:topo} while the statistical properties are established in Section
\ref{subsec:stats}. Experimental analyses for supporting the theory and 
showcasing the relevance of our divergences in a domain adaptation
situation are depicted in Section \ref{sec:expe}. Discussions on the perspectives
and limitations  are in Section \ref{sec:conclu}.

\section{Preliminaries}
\label{sec:back}

For the reader's convenience, we provide a brief summary of the standard notations and the definitions that will be used throughout the paper.

\paragraph{Notations.} For $d \in \mathbb{N}^*$,   let $\cP(\R^d)$ be the set of Borel probability measures on  $\R^d$ and $\cP_p(\R^d) \subset \cP(\R^d)$, those with finite moment of
order $p$, {i.e.,} $\cP_p(\R^d) \triangleq \{\mu \in \cP : \int \|x\|^p d\mu(x) < +\infty\}$, where $\|\cdot\|$ is the Euclidean norm and $\langle \cdot, \cdot \rangle$ is the Euclidean inner-product.
For two probability distributions $\mu$ and $\nu$, we denote their convolution as
$\mu * \nu \in \cP(\R^d)$ and by definition, we have
$(\mu * \nu)(A) = \int_x\int_y \mathbf{1}_A(x+y) d\mu(x) d\nu(y)$, where 
$\mathbf{1}_A(\cdot)$ is the indicator function over $A$. Given two independent random
variables $X \sim \mu$ and $Y \sim \nu$, we remind that $X + Y \sim \mu * \nu$.

The $d$-dimensional unit-sphere is noted as $\mathbb{S}^{d-1} \triangleq \{\theta \in \R^d : \|\theta\| = 1 \} $. We denote by $u_d$ the uniform distribution on $\mathbb{S}^{d-1}$ and we use $\delta(\cdot)$ to denote the Kronecker delta function. 
We note as $\E_\mu f$ the expectation of the function $f$ with respect to $\mu$. 
Hence, the characteristic function of a probability distribution
$\mu \in \cP(\R^d)$  is $\varphi_\mu(t) = \E_\mu[e^{iX^\top t}]$. Given this definition,
similarly to the Fourier transform, the characteristic function of the convolution of
two probability distributions has the following form 
$\varphi_{\nu * \mu}(t) =\varphi_{\nu}(t)\cdot\varphi_{\mu}(t)$.

\paragraph{Sliced Wasserstein Distance.} We remind in this paragraph several measures of similarity between two distributions.
The Wasserstein distance of order $p \in [1, \infty)$ between two measures in $\mathcal{P}_p(\R^d)$ is given by the
relaxation of the optimal transport problem,
and it is defined as
\begin{equation*}\label{eq:wd}
W_p^p(\mu,\nu) = \inf_{\gamma \in \Pi(\mu,\nu)} \int_{\R^d \times \R^d} 
\|x - x^\prime\|^p \gamma(x,x^\prime) dxdx^\prime
\end{equation*} 
where $\Pi(\mu,\nu)\triangleq \{ \gamma \in \mathcal{P}(\R^d \times \R^d) |  \pi_{1\#} \gamma=\mu,\pi_{2\#} \gamma=\nu\}$ and $\pi_1, \pi_2$ are
the marginal projectors of $\gamma$ on each of its coordinates. 
When $d=1$, the Wasserstein distance can be computed in a closed-form  owing to the
cumulative distributions of $\mu$ and $\nu$~\cite{rachev1998mass}.
Note that the superscript in $W_p^p$ refers to the power $p$.
 In practice for empirical distributions, the closed-form solution needs just the sorting of samples, which makes it very efficient. Due to this efficiency, efforts have been devoted to derive a metric for high-dimensional distributions based on 1D Wasserstein distance.
The main idea is to project high-dimensional probability distributions onto a random 1-dimensional space and
then to compute the Wasserstein distance. That  operation
can be theoretically formalized through the use of the Radon transform, leading to the so-called sliced Wasserstein distance~\cite{bonneel2019spot,kolouri2019gensliced,nguyen2020distributional,kolouri2016slicedkernels}.
\begin{definition}  For any $p \in [1, \infty)$ and two measures
	$\mu$, $\nu\in \mathcal{P}_p(\R^d)$, the sliced Wasserstein distance (SWD) reads as 
$$
\text{SWD}_p^p(\mu,\nu) \triangleq \int_{\mathbb{S}^{d-1}}  W_p^p(\mathcal{R}_\u \mu,\mathcal{R}_\u \nu) u_d(\u)d\u.
$$
where $\mathcal{R}_\u$ is the Radon transform of a probability distribution, namely
$\mathcal{R}_\u \mu(\cdot) = \int_{\R^d} \mu(\s) \delta(\cdot - \s^\top \u)d\s
	\label{eq:radon}$.

\end{definition}
In practice, the integral is approximated through a Monte-Carlo simulation leading to a sum of 1D Wasserstein distances over a fixed number of random directions $\u$.

\paragraph{Gaussian Smoothed Sliced Wasserstein Distance.} 

Based on this definition of SWD, replacing the Radon projected measures with their Gaussian-smoothed counterpart leads to the following definition:
\begin{definition}
	The $\sigma$-Gaussian smoothed $p$-Sliced Wasserstein distance between probability distributions $\mu$ and $\nu$ in  $\mathcal{P}_p(\R^d)$ is 
	\begin{equation*}
		\label{eq:gsmoothedwass}
	G_\sigma\text{SWD}_p^p(\mu,\nu) \triangleq \int_{\mathbb{S}^{d-1}}  W_p^p(\mathcal{R}_\u \mu * \mathcal{N}_\sigma,\mathcal{R}_\u \nu* \mathcal{N}_\sigma) u_d(\u)d\u.
	\end{equation*}
\end{definition}
It is important to note here that the smoothing (convolution) operation occurs after projection onto the one-dimensional space. Hence, assuming $X \sim \mu$, $Y \sim \nu$ in the integral, for a given $\u$, we compute the 1D Wasserstein distance  between the probability laws of $\u^\top X + Z$ and $\u^\top Y + Z^\prime$ with $Z,Z^\prime \sim \mathcal{N}_{\sigma}$ being independent random variables. 
The metric properties of $G_\sigma\text{SWD}_p^p$ for $p \geq 1$, of this Gaussian smoothed sliced Wasserstein distance have been discussed in a recent work~\cite{pmlr-v139-rakotomamonjy21a}. This latter work has also shown, in the context of differential privacy, 
the importance of convolving the Radon projected distribution with a
Gaussian instead of computing the sliced Wasserstein distance of the original distribution smoothed with a d-dimensional Gaussian 
$\mu * \mathcal{N}_\sigma$.   

\paragraph{Gaussian Smoothed Sliced Divergence.}  
The idea of slicing high-dimensional distributions before feeding them to a divergence between probability distributions can be extended
to other distance than Wasserstein distance. Those sliced
divergences have been studied by~\cite{nadjahi2020}. In a similar way, 
we can define a Gaussian smoothed  sliced divergence, given a 
divergence $D : \cP(\R^d) \times \cP(\R^d) \rightarrow \R^+$ for $d \geq 1$ as:
\begin{definition} 
	The $\sigma$-Gaussian smoothed $p$-Sliced Divergence between probability distributions $\mu$ and $\nu$ in $\cP_p(\R^d)$
	associated to the divergence  $D \triangleq D_\R$, $p\geq 1$  is 
	\begin{equation*}
		\label{eq:gsmootheddiv}
G_\sigma S\text{D}^p(\mu,\nu) \triangleq \int_{\mathbb{S}^{d-1}}  D^p(\mathcal{R}_\u \mu * \mathcal{N}_\sigma,\mathcal{R}_\u \nu* \mathcal{N}_\sigma) u_d(\u)d\u. 
\end{equation*}    
where the superscript $p$ refers to a power.
\end{definition}

Typical relevant divergence is the maximum mean discrepancy (MMD) \cite{gretton2012kernel} or the Sinkhorn divergence \cite{genevay2018learning,peyre2019COTnowpublisher}. In Section \ref{sec:expe}, we report empirical findings based on these divergences as well as on the Wasserstein distance.

\section{Theoretical Properties}
\label{sec:theory}
In this section, we will analyze the properties of 
the Gaussian smoothed sliced divergence, in term of
topological and statistical properties and the influence
of the Gaussian smoothing parameter $\sigma$ on the distance.
\subsection{Topology} \label{subsec:topo}

It has already been shown in~\cite{pmlr-v139-rakotomamonjy21a} that the Gaussian smoothed sliced 
Wasserstein is a metric on $\cP(\R^d)$. In the next, we extend these results to any divergence $D(\cdot,\cdot)$  under some assumptions.

\begin{theorem} 
\label{theorem:proof_topology}

For any $p \in [1, \infty ) $ and $\sigma>0$,  the following properties hold:
	\begin{enumerate}
\item if $D(\cdot,\cdot)$ is non-negative (or symmetric), then $G_\sigma S\text{D}^p(\cdot,\cdot)$ is non-negative (or symmetric);
\item if the base divergence $D(\cdot,\cdot)$ satisfies
the identity of indiscernibles, for $\mu^\prime, \nu^\prime \in \cP(\R) $, $D(\mu^\prime, \nu^\prime) = 0$ if and only if 
$\mu^\prime = \nu^\prime$, then this identity also holds for $G_\sigma S\text{D}^p(\cdot,\cdot)$ for any $\mu,\nu \in \mathcal{P}(\R^d)$;
\item if the  $D(\cdot,\cdot)$ satisfies the triangle inequality then its Gaussian smoothed sliced version $G_\sigma S\text{D}^p(\cdot,\cdot)$ satisfies the triangle inequality.
	\end{enumerate}

\end{theorem}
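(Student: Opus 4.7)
The plan is to handle the three assertions separately, since each reduces to moving the corresponding property of the base divergence $D$ through the integral against $u_d$ over $\mathbb{S}^{d-1}$, combined with the basic behaviour of the Radon transform and convolution with a Gaussian. Assertion (1) is immediate: non-negativity of $D^p$ passes through the integral, and symmetry of $D$ combined with exchanging the roles of $\mu$ and $\nu$ in the integrand gives symmetry of $G_\sigma S\text{D}^p$.

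For assertion (2), the ``if'' direction is trivial since $\mu = \nu$ makes the integrand identically zero. For the converse, suppose $G_\sigma S\text{D}^p(\mu,\nu) = 0$. Non-negativity of the integrand forces $D(\mathcal{R}_\u \mu * \mathcal{N}_\sigma,\mathcal{R}_\u \nu * \mathcal{N}_\sigma) = 0$ for $u_d$-almost every $\u$, and the identity of indiscernibles for $D$ then gives $\mathcal{R}_\u \mu * \mathcal{N}_\sigma = \mathcal{R}_\u \nu * \mathcal{N}_\sigma$ for $u_d$-a.e.\ $\u$. Taking characteristic functions, using the convolution-product formula from the preliminaries together with the identity $\varphi_{\mathcal{R}_\u \mu}(t) = \varphi_\mu(t\u)$, yields
\[
\varphi_\mu(t\u)\, e^{-\sigma^2 t^2/2} = \varphi_\nu(t\u)\, e^{-\sigma^2 t^2/2}
\]
for every $t \in \R$ and $u_d$-a.e.\ $\u \in \mathbb{S}^{d-1}$. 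The Gaussian factor is nowhere zero, so $\varphi_\mu(t\u) = \varphi_\nu(t\u)$ for all $t$ and a.e.\ $\u$. Fixing $t$ and using the continuity of $\u \mapsto \varphi_\mu(t\u) - \varphi_\nu(t\u)$ extends the equality to every $\u$, hence to all of $\R^d = \{t\u : t \in \R,\ \u \in \mathbb{S}^{d-1}\}$, and $\mu = \nu$ by Lévy's uniqueness theorem.

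For assertion (3), fix $\mu,\nu,\rho \in \cP_p(\R^d)$. The triangle inequality for $D$, applied pointwise in $\u$ to the smoothed Radon projections, gives
\[
D(\mathcal{R}_\u \mu * \mathcal{N}_\sigma,\mathcal{R}_\u \nu * \mathcal{N}_\sigma) \leq D(\mathcal{R}_\u \mu * \mathcal{N}_\sigma,\mathcal{R}_\u \rho * \mathcal{N}_\sigma) + D(\mathcal{R}_\u \rho * \mathcal{N}_\sigma,\mathcal{R}_\u \nu * \mathcal{N}_\sigma).
\]
Viewing both sides as functions of $\u$ in $L^p(\mathbb{S}^{d-1}, u_d)$ and invoking Minkowski's inequality in this space yields
\[
[G_\sigma S\text{D}^p(\mu,\nu)]^{1/p} \leq [G_\sigma S\text{D}^p(\mu,\rho)]^{1/p} + [G_\sigma S\text{D}^p(\rho,\nu)]^{1/p},
\]
which is the desired triangle inequality on $G_\sigma S\text{D}$.

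The one delicate step is the converse direction of (2): vanishing of the integral only forces equality for $u_d$-a.e.\ direction, so one cannot directly invoke injectivity of the Radon transform, which would normally demand knowledge of $\mathcal{R}_\u\mu$ for \emph{every} $\u$. The characteristic-function route bypasses this, because the Gaussian factor is invertible everywhere and $\varphi_\mu$ is continuous, letting us promote a.e.\ equality on rays to equality everywhere in $\R^d$. This is also precisely the place where the smoothing parameter $\sigma > 0$ is exploited.
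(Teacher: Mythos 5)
Your proposal is correct and follows essentially the same route as the paper's own proof: non-negativity and symmetry pass through the integral, the identity of indiscernibles is obtained via the Fourier/characteristic-function factorization $\mathcal{F}[\mathcal{R}_\u\mu * \mathcal{N}_\sigma](v) = \mathcal{F}[\mathcal{N}_\sigma](v)\,\mathcal{F}[\mu](v\u)$ together with the non-vanishing of the Gaussian factor and injectivity of the Fourier transform, and the triangle inequality follows from Minkowski's inequality in $L^p(\mathbb{S}^{d-1}, u_d)$. Your explicit continuity argument promoting the $u_d$-a.e.\ equality of $\varphi_\mu(t\u)$ and $\varphi_\nu(t\u)$ to equality for every $\u$ is a welcome refinement of a step the paper passes over silently, but it does not change the underlying approach.
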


The above theorem shows that under mild hypotheses over the base divergence $D$, as being a metric for instance, the metric property of its Gaussian smoothed sliced version naturally derives. As exposed in the appendix, the more involved property to prove is the identity of indiscernibles.

Now, we establish under which  conditions on the divergence $D$, the convergence of a sequence in
$G_\sigma S D^p$ implies weak convergence in $\cP(\R^d)$.

\begin{theorem} Let $\sigma \geq 0, p \in [1, \infty)$, $\mu
\in \cP_p(\R^d)$, the sequence of distributions $\{\mu_k \in \cP_p(\R^d) \}_{k \geq 1}$. Assume that the divergence $\text{D}$ metricizes the weak topology. 
Then, $ \lim_{k \rightarrow \infty }G_\sigma S\text{D}^p(\mu_k,\mu) = 0 $ if and only if
 $\{\mu_k\}_k$ converges weakly to $\mu$ \emph{i.e.}, if
 for any $f$ in the set of bounded and continuous functions,  
 $\mu_k \rightarrow \mu$ if $\int_{\R^d} f(x) d \mu_k(x) \rightarrow 
\int_{\R^d} f(x) d \mu(x)$.	 
	
\end{theorem}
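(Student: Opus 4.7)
The plan is to prove the two implications separately, exploiting the fact that $D$ metricizing the weak topology on $\cP(\R)$ reduces everything to a statement about weak convergence of the smoothed one-dimensional projections $\mathcal{R}_\u \mu_k * \mathcal{N}_\sigma$.

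For the direction $\mu_k \to \mu$ weakly $\Rightarrow G_\sigma S D^p(\mu_k,\mu) \to 0$, I would first invoke the continuous mapping theorem applied to the linear map $x \mapsto \u^\top x$ to obtain $\mathcal{R}_\u \mu_k \to \mathcal{R}_\u \mu$ weakly in $\cP(\R)$ for every $\u \in \mathbb{S}^{d-1}$. Next, convolution against a fixed independent Gaussian preserves weak convergence (if $X_k \to X$ in distribution and $Z \sim \mathcal{N}_\sigma$ is independent, then $X_k + Z \to X + Z$), so $\mathcal{R}_\u \mu_k * \mathcal{N}_\sigma \to \mathcal{R}_\u \mu * \mathcal{N}_\sigma$ weakly for every $\u$. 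Since $D$ metricizes the weak topology, the integrand $D^p(\mathcal{R}_\u \mu_k * \mathcal{N}_\sigma, \mathcal{R}_\u \mu * \mathcal{N}_\sigma)$ tends to $0$ pointwise in $\u$. Dominated convergence on the compact $\mathbb{S}^{d-1}$ then pushes the limit inside the integral; the required uniform bound on the integrand comes from a triangle-type argument together with the fact that the smoothed marginals live in a weakly compact family.

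For the converse, assume $G_\sigma S D^p(\mu_k,\mu) \to 0$. By $L^1$-convergence on $(\mathbb{S}^{d-1}, u_d)$, every subsequence admits a further subsequence $\{k_j\}$ along which $D^p(\mathcal{R}_\u \mu_{k_j} * \mathcal{N}_\sigma, \mathcal{R}_\u \mu * \mathcal{N}_\sigma) \to 0$ for $u_d$-almost every $\u$. Since $D$ metricizes the weak topology, this yields $\mathcal{R}_\u \mu_{k_j} * \mathcal{N}_\sigma \to \mathcal{R}_\u \mu * \mathcal{N}_\sigma$ weakly, and L\'evy's continuity theorem on $\R$ gives pointwise convergence of the corresponding characteristic functions. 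Using the convolution identity $\varphi_{\mathcal{R}_\u \mu * \mathcal{N}_\sigma}(t) = \varphi_\mu(t\u)\,e^{-t^2\sigma^2/2}$, I divide through by the strictly positive Gaussian factor (this is precisely where the Gaussian smoothing is essential) to conclude $\varphi_{\mu_{k_j}}(t\u) \to \varphi_\mu(t\u)$ for all $t \in \R$ and $u_d$-a.e. $\u$; in polar coordinates, this means $\varphi_{\mu_{k_j}} \to \varphi_\mu$ Lebesgue-almost everywhere on $\R^d$.

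To finish, I would upgrade a.e. convergence to pointwise convergence via tightness. Selecting $d$ linearly independent directions from the full-measure set of good $\u$'s, the weak convergence (hence tightness) of $\mathcal{R}_{\u_i} \mu_{k_j} * \mathcal{N}_\sigma$ transfers to tightness of $\mathcal{R}_{\u_i} \mu_{k_j}$, which jointly forces tightness of $\{\mu_{k_j}\}$ in $\R^d$. Tightness implies the family $\{\varphi_{\mu_{k_j}}\}$ is equicontinuous on $\R^d$, so the a.e. pointwise convergence upgrades to pointwise convergence everywhere, and L\'evy's continuity theorem in $\R^d$ yields $\mu_{k_j} \to \mu$ weakly. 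The subsequence-of-subsequence principle then closes the argument. The main obstacle is exactly this last upgrading step — bridging a.e. control of the characteristic function to genuine weak convergence in $\R^d$ — which crucially relies on both the equicontinuity afforded by tightness and the non-vanishing of the Gaussian characteristic function that allows us to recover $\varphi_{\mu_{k_j}}$ from its smoothed version.
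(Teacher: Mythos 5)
Your argument is correct and, at its core, uses the same key mechanism as the paper's proof: deconvolving via characteristic functions and exploiting that $\varphi_{\mathcal{N}_\sigma}$ never vanishes. But your route is genuinely more elaborate, and it fills real gaps that the paper's three-line sketch leaves open. The paper jumps directly from $G_\sigma SD^p(\mu_k,\mu)\to 0$ to ``$\mu_k*\mathcal{N}_\sigma \to \mu*\mathcal{N}_\sigma$'' as if the smoothing acted in $\R^d$, whereas the divergence only controls the one-dimensional projections $\mathcal{R}_\u\mu_k*\mathcal{N}_\sigma$ in an integrated ($L^1(u_d)$) sense; you correctly extract a.e.-$\u$ convergence along subsequences, reassemble $\varphi_{\mu_{k_j}}$ on Lebesgue-a.e.\ point of $\R^d$ via polar coordinates, and then upgrade to everywhere convergence through tightness (obtained from $d$ independent good directions) before invoking L\'evy's continuity theorem and the subsequence principle. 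The paper's closing appeal to the ``one-to-one correspondence between characteristic functions'' silently conflates injectivity of the Fourier transform with the continuity theorem and ignores the a.e.-versus-everywhere issue entirely, so your extra work is not pedantry --- it is what makes the converse direction actually go through. The one place where you are as informal as the paper is the dominated-convergence step in the forward direction: for a general $D$ that merely metricizes the weak topology, a $u_d$-integrable dominating function for $D^p(\mathcal{R}_\u\mu_k*\mathcal{N}_\sigma,\mathcal{R}_\u\mu*\mathcal{N}_\sigma)$, uniform in $k$, is not automatic (it holds, e.g., if $D$ is bounded, which is the kind of hypothesis under which the cited result of Nadjahi et al.\ is proved); you should either add such a boundedness assumption or justify the domination explicitly rather than gesturing at weak compactness.
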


\begin{proof}
By using results from~\cite{nadjahi2020}, we know that if $D$ metricizes the weak topology for $\cP(\R)$ then the weak convergence in $\cP(\R)$ is equivalent to the convergence under $D$. Hence,  we have
$$	 G_\sigma SD^p(\mu_k,\mu) \rightarrow 0
	\Leftrightarrow  \mu_k * \cN_\sigma \rightarrow \mu * \cN_\sigma$$
then, using the convolution property of characteristic function gives
	$$\varphi_{\mu_k}(t)\varphi_{\cN_\sigma}(t) 
	\rightarrow  \varphi_{\mu}(t)\varphi_{\mathcal{N}_\sigma}(t) \quad \forall t.$$
	This means that for all $t$, $\varphi_{\mu_k}(t) \rightarrow  \varphi_{\mu}(t) $ 
which concludes the proof, owing to the one-to-one correspondence between characteristic 
functions.
\end{proof}

\subsection{Statistical properties} \label{subsec:stats}

The next theoretical question we are interested in is about the error
we made when the true distribution $\mu$ is approximated by its empirical distribution $\hat \mu$. Such a case is common in practical applications where only (high-dimensional) empirical samples are at disposal. 
 Specifically, we are interested in quantifying two key properties of empirical Gaussian smoothed
 divergence: {\it (i)} 
 the convergence of 
 $G_\sigma S D^p( \hat \mu_n,\hat \nu_n)$ to $G_\sigma S D^p(\mu,\nu)$
{\it (ii)} 
the convergence of 
$ \widehat{G_\sigma S D^p} (\mu,\nu)$ to $G_\sigma S D^p(\mu,\nu)$, {i.e.,} when
approximating the expectation over the random projection with sample mean.

\subsubsection{Sample complexity} 

Herein, our goal is to quantify the error made when approximating $G_\sigma S D^p(\mu,\nu)$ with ${G_\sigma S D^p} (\hat\mu_n,\hat\nu_n)$, where $\hat\mu_n,\hat\nu_n$ are the empirical counterparts of $\mu, \nu$ defined {over $n$ samples}. More precisely, we are interested in establishing an order of the convergence rate of  ${G_\sigma S D^p} (\hat\mu_n,\hat\nu_n)$ towards $G_\sigma S D^p(\mu,\nu)$, according to the number of samples $n.$ This rate stands for the so-called {\it sample complexity.} 

\begin{theorem} 
\label{theorem:sample_complexity}
Fix $p \in [1, \infty)$ and assume that for any $\mu'\in \mathcal{P}(\R)$ with empirical measure $\hat\mu^{'}_n$, $\E[D^p( \hat\mu^{'}_n,\mu')] \leq \alpha_n(p)$. Then, for any $\mu\in \mathcal{P}(\R^d)$ with empirical measure $\hat\mu_n$,
\begin{equation*}
\E[G_\sigma SD^p( \hat \mu_n,\mu)] \leq \alpha_n(p).
\end{equation*}
Additionally, if $D^p$ is a pseudo-metric (non-negative, symmetric with triangle inequality), then 
\begin{equation*}
\E[|G_\sigma SD^p( \hat \mu_n,\hat \nu_n) - G_\sigma SD^p(\mu,\nu)|] \leq 2 \alpha_n(p).
\end{equation*}
\end{theorem}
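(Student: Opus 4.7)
The plan is to prove the two inequalities in sequence, with the second following from the first via a triangle-inequality argument at the level of $G_\sigma SD^p$. For the first bound, I begin by applying Fubini's theorem to swap the sample expectation and the integral over $\u$, giving
$\E[G_\sigma SD^p(\hat\mu_n,\mu)] = \int_{\mathbb{S}^{d-1}} \E\!\left[D^p(\mathcal{R}_\u \hat\mu_n * \cN_\sigma,\mathcal{R}_\u \mu * \cN_\sigma)\right] u_d(\u)\,d\u$,
which reduces the task to a uniform (in $\u$) bound on the inner one-dimensional expectation. The subtlety is that $\mathcal{R}_\u \hat\mu_n * \cN_\sigma$ is \emph{not} literally the empirical measure of $\mathcal{R}_\u \mu * \cN_\sigma$ in the standard iid-samples sense, so the hypothesis cannot be invoked on it directly.

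To bridge this gap, I would introduce auxiliary iid noises $Z_1,\dots,Z_n \sim \cN_\sigma$ independent of the $X_i$, so that $Y_i := \u^\top X_i + Z_i$ are iid samples from $\mathcal{R}_\u \mu * \cN_\sigma$, and set $\tilde\mu_n = \tfrac{1}{n}\sum_i \delta_{Y_i}$. By hypothesis, $\E[D^p(\tilde\mu_n,\mathcal{R}_\u \mu * \cN_\sigma)] \leq \alpha_n(p)$. A short computation, testing against bounded measurable functions and using the symmetry of the Gaussian, shows that $\mathcal{R}_\u \hat\mu_n * \cN_\sigma = \E_Z[\tilde\mu_n \mid X_1,\dots,X_n]$ as (random) measures. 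Jensen's inequality applied to the convex functional $\rho \mapsto D^p(\rho,\mathcal{R}_\u \mu * \cN_\sigma)$ then transfers the hypothesis bound from $\tilde\mu_n$ to $\mathcal{R}_\u \hat\mu_n * \cN_\sigma$, and integrating over $\u$ yields the first inequality.

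For the second bound, I would first note that $G_\sigma SD^p$ inherits the triangle inequality from $D^p$ by integrating the pointwise inequality in $\u$ (item~3 of Theorem~\ref{theorem:proof_topology}). Combined with symmetry, this gives the reverse triangle inequality
$|G_\sigma SD^p(\hat\mu_n,\hat\nu_n) - G_\sigma SD^p(\mu,\nu)| \leq G_\sigma SD^p(\hat\mu_n,\mu) + G_\sigma SD^p(\hat\nu_n,\nu)$,
and taking expectations while applying the first claim to each term on the right yields the $2\alpha_n(p)$ bound. The main obstacle is the Jensen step in the first part: it implicitly requires $D^p(\cdot,\mu')$ to be convex in its first argument. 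This is natural and holds for all divergences of interest here (squared MMD, $W_p^p$, Sinkhorn divergence), but since it is not listed among the stated hypotheses, a fully rigorous write-up must either add this convexity as an assumption or adopt a broader notion of empirical counterpart under which $\mathcal{R}_\u \hat\mu_n * \cN_\sigma$ itself is recognised as the natural estimator of $\mathcal{R}_\u \mu * \cN_\sigma$.
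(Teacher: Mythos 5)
Your proof has the same skeleton as the paper's: Fubini to reduce the first bound to the one-dimensional expectation, then the triangle inequality for $G_\sigma SD^p$ (obtained by integrating the pointwise triangle inequality for $D^p$ over $\u$) together with symmetry to get the $2\alpha_n(p)$ bound. Where you diverge is the key step of the first part, and your version is the more careful one. The paper simply invokes the hypothesis on $\E\big[D^p(\mathcal{R}_\u \hat\mu_n * \mathcal{N}_\sigma,\mathcal{R}_\u \mu* \mathcal{N}_\sigma)\big]$, implicitly treating $\mathcal{R}_\u \hat\mu_n * \mathcal{N}_\sigma$ as ``the empirical measure'' of $\mathcal{R}_\u \mu* \mathcal{N}_\sigma$. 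As you observe, it is not: it is a mixture of Gaussians centred at the projected samples, whereas the empirical measure in the literal sense of the hypothesis is the discrete measure $\tfrac1n\sum_i \delta_{Y_i}$ with $Y_i = \u^\top X_i + Z_i$. Your bridge --- writing $\mathcal{R}_\u \hat\mu_n * \mathcal{N}_\sigma$ as the conditional expectation over the noise of that discrete empirical measure and applying Jensen's inequality to the convex functional $\rho \mapsto D^p(\rho, \mathcal{R}_\u\mu*\mathcal{N}_\sigma)$ --- closes this gap rigorously, at the price of a convexity hypothesis on $D^p$ in its first argument, which you correctly flag as absent from the statement but which does hold for $W_p^p$, squared MMD, and the other divergences the paper considers. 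The paper's shorter argument is valid only under the broader reading of ``empirical counterpart'' that you mention at the end, which is presumably what the authors intend (the same silent identification reappears when they apply the Fournier--Guillin bound in the proof of Proposition~\ref{corollary:sampComplexGSWD}). In short: same route, but you supply a missing lemma that the paper elides; either your Jensen step or an explicit reformulation of the hypothesis is needed for a fully rigorous statement.
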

\begin{proof}
We have 
\begin{align*}
\E&[G_\sigma SD^p( \hat \mu_n,\mu)]\\
& = \E\bigg[\int_{\mathbb{S}^{d-1}}D^p(\mathcal{R}_\u \hat\mu_n * \mathcal{N}_\sigma,\mathcal{R}_\u \mu* \mathcal{N}_\sigma) u_d(\u)\text{d} \u \bigg]\\
&\leq \int_{\mathbb{S}^{d-1}}\E\big[D^p(\mathcal{R}_\u \hat\mu_n * \mathcal{N}_\sigma,\mathcal{R}_\u \mu* \mathcal{N}_\sigma) u_d(\u)\text{d} \u \big]\\
&\leq \int_{\mathbb{S}^{d-1}}\alpha_n(p) u_d(\u)\text{d} \u = \alpha_n(p).
\end{align*}
The triangle inequality entails that, $G_\sigma SD^p( \hat \mu_n,\hat \nu_n) \leq G_\sigma SD^p( \hat \mu_n, \mu) + G_\sigma SD^p( \mu,\nu) + G_\sigma SD^p( \nu,\hat \nu_n),$ which entails, by taking expectation with respect to $\hat \mu_n, \hat \nu_n,$
\begin{align*}
\E\big[|G_\sigma SD^p(& \hat \mu_n,\hat \nu_n) - G_\sigma SD^p(\mu,\nu)|\big]\\
&\leq \E\big[G_\sigma SD^p( \hat \mu_n,\mu)\big] + \E\big[G_\sigma SD^p(\nu,\hat \nu_n)\big]\\
&\leq \E\big[G_\sigma SD^p( \hat \mu_n,\mu)\big] + \E\big[G_\sigma SD^p(\hat \nu_n,\nu)\big]\\
& \leq 2 \alpha_n(p),
\end{align*}
which completes the proof.
\end{proof}
\begin{remark}
Given any base divergence $D^p$, Theorem~\ref{theorem:sample_complexity} shows that the sample complexity of $G_\sigma SD^p$ is proportional to the one dimensional sample complexity of $D^p$. 
\end{remark}

Next, we focus on the sample complexity for the special case of Gaussian smoothed sliced Wasserstein distance. We also provide the convergence rate of $G_\sigma SWD^p( \hat \mu_n,\hat \nu_n)$ towards $G_\sigma SWD^p( \mu, \nu)$. 

\begin{proposition}
\label{corollary:sampComplexGSWD}
For any $p, q \in [1, \infty)$ such that $q > p,$ consider $\mu, \nu \in \mathcal{P}_q(\R^d)$ with its empirical measure $\hat \mu_n.$ Then, the following holds
\begin{align*}
\E&[|G_\sigma SWD^p( \hat \mu_n,\hat \nu_n) - G_\sigma SWD^p(\mu,\nu)|] \leq \alpha_n(p,q, \sigma)
\end{align*}
where 
\begin{align*}
\alpha_n&(p,q,\sigma)\\
&= 2C_{p,q}\times\begin{cases}
2^{p(q-1)/q}(M_q(\mu,\nu) + M_q(\mathcal{N}_\sigma))^{p/q}{\bf{1}}_{q \in 2\mathbb{N}^*},\\
2^{p(q-1)/q}M_q(\mu,\nu))^{p/q}{\bf{1}}_{q \in 2\mathbb{N}+1},\end{cases}
\\
& \qquad \qquad \times 
\begin{cases}
n^{-1/2}{\bf{1}}_{q >2p }, \\
n^{-1/2}\log(n){\bf{1}}_{q=2p}\\
n^{-(q-p)/q}{\bf{1}}_{q \in (p ,2p)}\end{cases}
\end{align*}
and where $M_q(\mu,\nu) = M_q(\mu) + M_q(\nu),$
$C_{p,q}$ is a positive constant  depending only $p,q$, and $M_{q}(\mathcal{N}_\sigma)$ stands for the  $q$-th moment of $\mathcal{N}_\sigma$, that is 
\begin{equation*}
M_{q}(\mathcal{N}_\sigma)){\bf{1}}_{q \in 2\mathbb{N}^*} = \frac{(2q)!}{2^qq!}\sigma^{2q}= 1\cdot2\cdot3\cdots(2q-1)\sigma^{2q}.
\end{equation*}
\end{proposition}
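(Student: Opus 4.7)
The plan is to chain three ingredients: the generic sample-complexity reduction of Theorem~\ref{theorem:sample_complexity} applied to the base divergence $D=W_p$, the classical Fournier--Guillin one-dimensional empirical Wasserstein rate, and a uniform-in-$\u$ control on the $q$-th moment of the smoothed projection $\mathcal{R}_\u\mu\ast\mathcal{N}_\sigma$.

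First, since $W_p$ is a pseudo-metric on $\cP(\R)$, Theorem~\ref{theorem:sample_complexity} already performs the triangle-inequality step, so it is enough to produce a bound of the form
$$\E\!\left[W_p^p\bigl(\mathcal{R}_\u\hat\mu_n\ast\mathcal{N}_\sigma,\ \mathcal{R}_\u\mu\ast\mathcal{N}_\sigma\bigr)\right] \le \beta_n(p,q,\sigma)$$
uniformly in $\u\in\mathbb{S}^{d-1}$ (and similarly for $\nu$); the leading factor $2$ appearing in the proposition is then precisely the factor coming from Theorem~\ref{theorem:sample_complexity}. Because the Gaussian noise is drawn independently of the samples, $\mathcal{R}_\u\hat\mu_n\ast\mathcal{N}_\sigma$ is itself the empirical measure of $n$ i.i.d.\ draws from $\mathcal{R}_\u\mu\ast\mathcal{N}_\sigma$, so the usual 1D empirical-Wasserstein machinery applies verbatim to each slice.

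Second, I would invoke the Fournier--Guillin bound: for any $\rho\in\cP_q(\R)$ with $q>p$,
$$\E[W_p^p(\hat\rho_n,\rho)] \le C_{p,q}\,M_q(\rho)^{p/q}\,\tau_n(p,q),$$
where $\tau_n(p,q)$ is $n^{-1/2}$ when $q>2p$, $n^{-1/2}\log n$ when $q=2p$, and $n^{-(q-p)/q}$ when $p<q<2p$. Applied with $\rho=\mathcal{R}_\u\mu\ast\mathcal{N}_\sigma$, the problem reduces to controlling $M_q(\mathcal{R}_\u\mu\ast\mathcal{N}_\sigma)$ uniformly in $\u$. Writing $X\sim\mu$ and $Z\sim\mathcal{N}_\sigma$ independent, $\mathcal{R}_\u\mu\ast\mathcal{N}_\sigma$ is the law of $\langle\u,X\rangle+Z$, and since $|\langle\u,X\rangle|\le\|X\|$ on the unit sphere, convexity of $x\mapsto|x|^q$ yields
$$M_q(\mathcal{R}_\u\mu\ast\mathcal{N}_\sigma) \le 2^{q-1}\bigl(M_q(\mu)+M_q(\mathcal{N}_\sigma)\bigr).$$
Raising to the $p/q$-th power, bounding $M_q(\mu)$ by $M_q(\mu,\nu)=M_q(\mu)+M_q(\nu)$, and then combining the $\mu$- and $\nu$-sides via Theorem~\ref{theorem:sample_complexity} assembles the leading $2C_{p,q}$, the prefactor $2^{p(q-1)/q}$, the moment factor $(M_q(\mu,\nu)+M_q(\mathcal{N}_\sigma))^{p/q}$ in the even-$q$ branch, and the rate $\tau_n(p,q)$.

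For the odd-$q$ branch, the refinement would come from exploiting the vanishing of odd raw Gaussian moments: expanding $\E[(\langle\u,X\rangle+Z)^q]=\sum_{k}\binom{q}{k}\E[\langle\u,X\rangle^k]\E[Z^{q-k}]$ annihilates every cross-term with $q-k$ odd, and a careful accounting should allow $M_q(\mathcal{N}_\sigma)$ to be dropped from the moment bound. I expect this odd-$q$ case to be the main obstacle: the crude convexity inequality above does not itself produce this refinement, because absolute odd Gaussian moments do not vanish, so one must either track signed moments throughout or use an alternative symmetrization argument, and one must be careful about whether $M_q$ is interpreted as a raw or an absolute moment. The remaining pieces---Theorem~\ref{theorem:sample_complexity}, the Fournier--Guillin rate, and the convexity moment inequality---are entirely standard and enter the proof as quotable ingredients.
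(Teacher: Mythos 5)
Your proposal follows essentially the same route as the paper's own proof: the reduction via Theorem~\ref{theorem:sample_complexity} (with the factor $2$ coming from the pseudo-metric step), the Fournier--Guillin one-dimensional rate, and the convexity bound $M_q(\mathcal{R}_\u\mu*\mathcal{N}_\sigma)\le 2^{q-1}\bigl(M_q(\mu)+M_q(\mathcal{N}_\sigma)\bigr)$ are exactly the three ingredients used there. The two points you flag as delicate --- that $\mathcal{R}_\u\hat\mu_n*\mathcal{N}_\sigma$ is not literally the empirical measure of $n$ i.i.d.\ draws from $\mathcal{R}_\u\mu*\mathcal{N}_\sigma$ (it is a mixture of Gaussians, not an atomic measure), and that the odd-$q$ branch dropping $M_q(\mathcal{N}_\sigma)$ does not follow from the convexity inequality for \emph{absolute} moments --- are glossed over in the paper's proof as well, so your account is, if anything, more candid about where the argument is thin.
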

The latter theoretical results show that empirical Gaussian smoothed Wasserstein distance converges at a rate of order $n^{-1/2}$ in the best scenario. 
It is worth also noting that the sample complexity depends
on the amount of smoothing through the moment of the Gaussian
noise : the larger the amount of smoothing, the worse is the 
constant of the complexity.

\begin{figure*}[t]
	\centering
	\includegraphics[width=7cm]{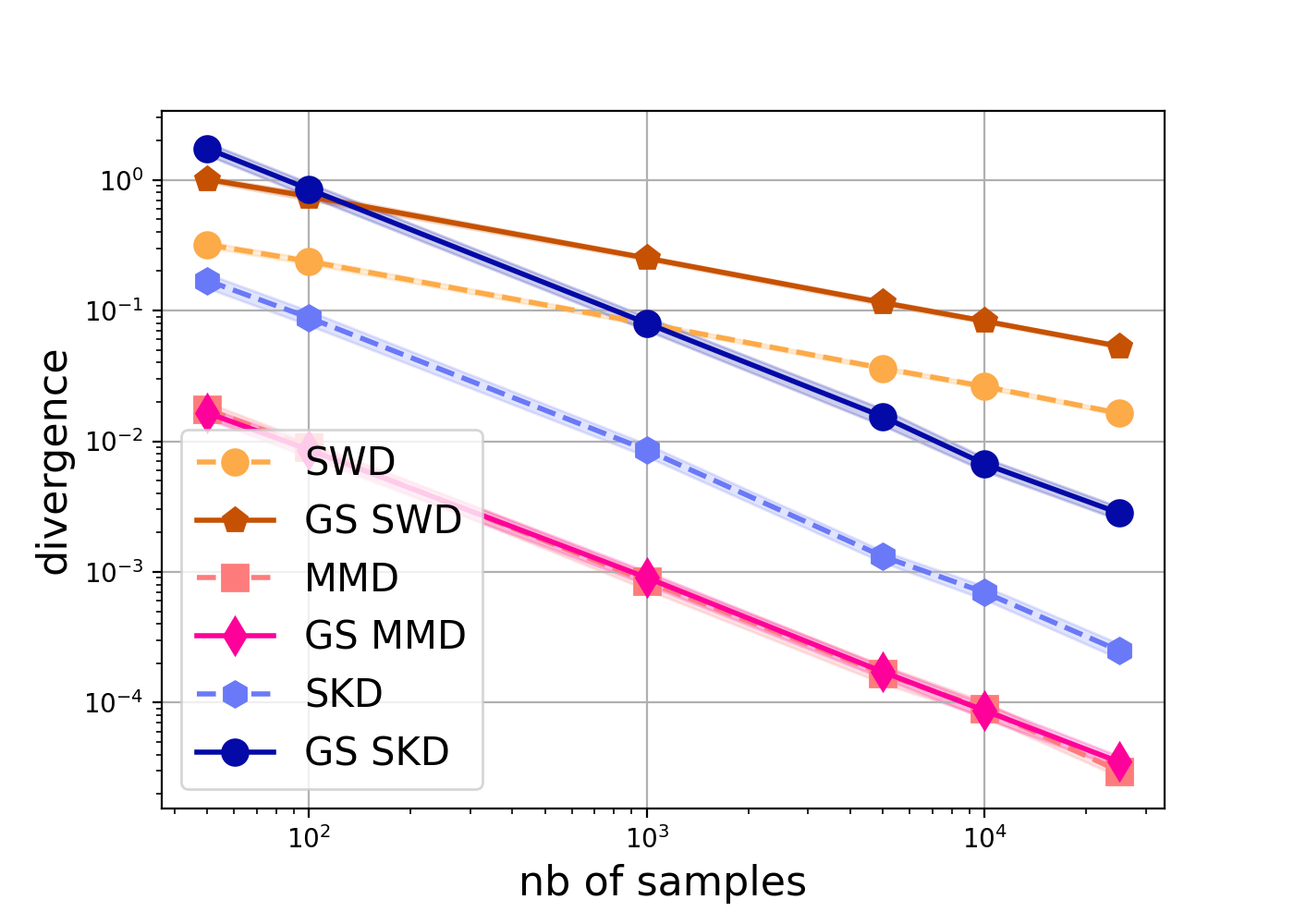} 
	\includegraphics[width=7cm]{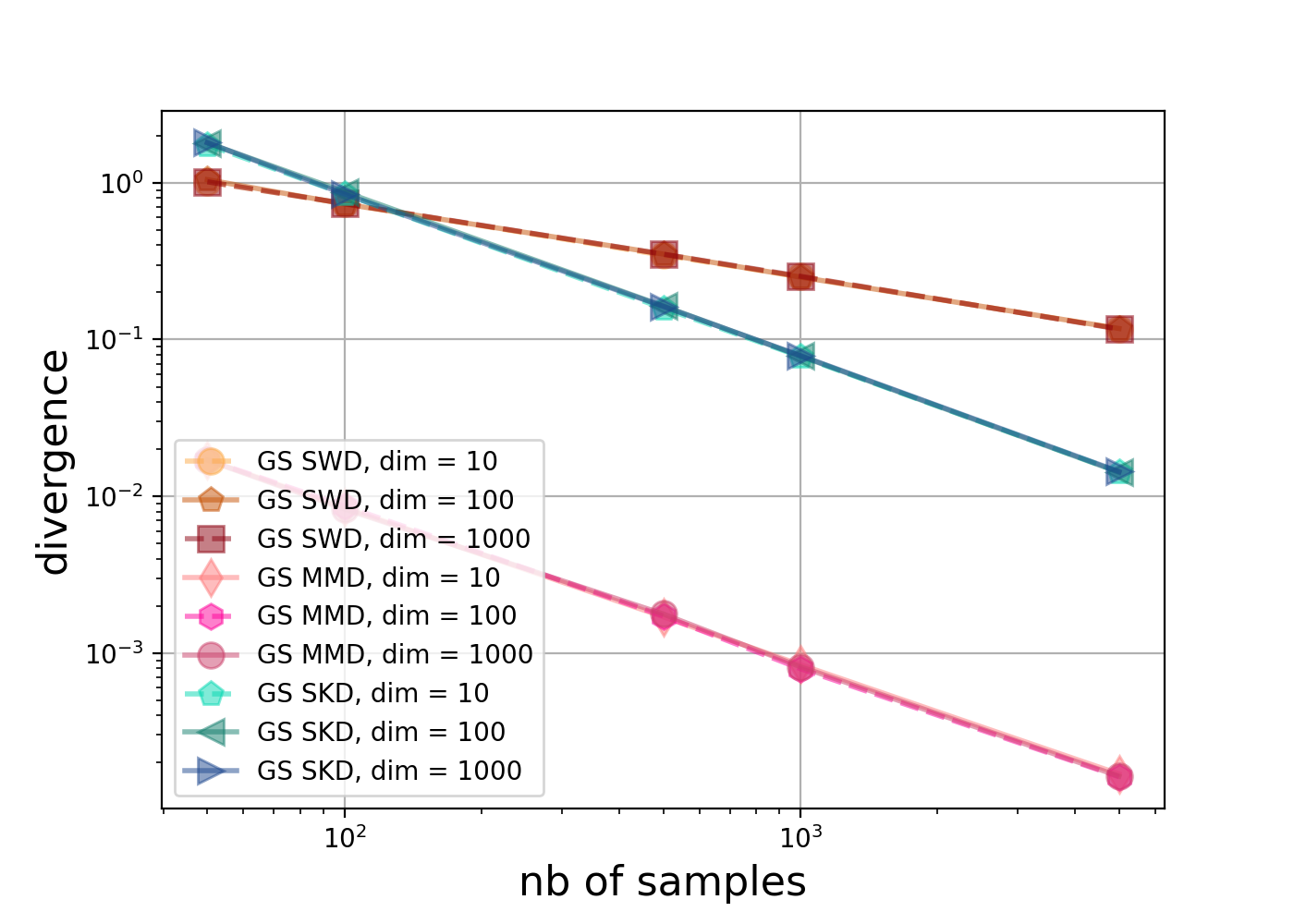} 
	
	\caption{Measuring the divergence between two sets of samples in $\R^{50}$, of increasing size,
		randomly drawn from $\mathcal{N}(0,\mathbf{I})$.
		We compare three sliced divergences and their Gaussian smoothed versions
		with a $\sigma=3$. (left) dimension has been set to $d=50$. (right) sample complexity       with different dimensions. This plot confirms that the 
		complexity is dimension-independent.
		\label{fig:divsamples}}  
\end{figure*}

\subsubsection{Projection complexity}

To compute the Gaussian smoothed sliced divergence, one may resort to  a Monte Carlo scheme to numerically approximate the integral in $G_\sigma SD^p(\mu,\nu)$. Towards this, let define the following sum:
\begin{align*}
\widehat{G_\sigma SD}^p(\mu,\nu) = \frac{1}{L}
\sum_{l=1}^L D^p(\mathcal{R}_{\u_l}\hat\mu_n * \mathcal{N}_\sigma,\mathcal{R}_{\u_l} \hat\nu_n* \mathcal{N}_\sigma),
\end{align*}
where $\u_l$ is a random vector uniformly drawn  from $\mathbb{S}^{d-1}$, for $l=1, \ldots, L.$ Theorem~\ref{theorem:error_MC} shows that for a fixed dimension $d$, the root mean square error of Monte Carlo approximation is of order $O\left(\frac{1}{\sqrt{L}}\right)$, which corresponds to the projection complexity.
\begin{theorem}
\label{theorem:error_MC}
Let $\mu, \nu \in \mathcal{P}(\R^d)$ and fix $p\in[1, \infty)$. Then, the error related to the Monte Carlo estimation of $G_\sigma SD^p$ is bounded as follows
\begin{align*}
\E[|\widehat{G_\sigma SD}^p( \mu,\nu) - G_\sigma SD^p(\mu,\nu)|] \leq \frac{A(p,\sigma)}{\sqrt{L}},
\end{align*}
where $A^2(p,\sigma) = \int_{\mathbb{S}^{d-1}}\big(D^p(\mathcal{R}_{\u}\mu * \mathcal{N}_\sigma,\mathcal{R}_{\u} \nu* \mathcal{N}_\sigma) - \bar\vartheta_p\big)^2\text{d}u_d(\u),$
with $\bar\vartheta_p = \int_{\mathbb{S}^{d-1}}D^p(\mathcal{R}_{\u}\mu * \mathcal{N}_\sigma,\mathcal{R}_{\u} \nu* \mathcal{N}_\sigma)\text{d}u_d(\u)$.
\end{theorem}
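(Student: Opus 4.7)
The plan is to recognize that $\widehat{G_\sigma SD}^p(\mu,\nu)$ is simply a sample mean of $L$ i.i.d. random variables, and then apply a standard variance bound combined with Jensen's inequality. Concretely, for each $l = 1, \dots, L$, set
\[
Z_l \triangleq D^p(\mathcal{R}_{\u_l}\mu * \mathcal{N}_\sigma,\mathcal{R}_{\u_l}\nu * \mathcal{N}_\sigma),
\]
where the $\u_l$ are i.i.d. with law $u_d$ on $\mathbb{S}^{d-1}$. Then $Z_1, \dots, Z_L$ are i.i.d. copies of a single random variable with expectation $\E[Z_1] = \bar\vartheta_p = G_\sigma SD^p(\mu,\nu)$ and variance exactly $A^2(p,\sigma)$, by the definitions given in the statement.

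Next, I would note that $\widehat{G_\sigma SD}^p(\mu,\nu) = \tfrac{1}{L}\sum_{l=1}^L Z_l$ is an unbiased estimator of $G_\sigma SD^p(\mu,\nu)$, so
\[
\text{Var}\!\left(\widehat{G_\sigma SD}^p(\mu,\nu)\right) = \frac{1}{L^2}\sum_{l=1}^L \text{Var}(Z_l) = \frac{A^2(p,\sigma)}{L}.
\]
Then I would invoke Jensen's inequality (applied to the concave map $x \mapsto \sqrt{x}$) to pass from the $L^2$ error to the $L^1$ error:
\[
\E\!\left[\left|\widehat{G_\sigma SD}^p(\mu,\nu) - G_\sigma SD^p(\mu,\nu)\right|\right] \leq \sqrt{\E\!\left[\left(\widehat{G_\sigma SD}^p(\mu,\nu) - G_\sigma SD^p(\mu,\nu)\right)^2\right]} = \sqrt{\frac{A^2(p,\sigma)}{L}} = \frac{A(p,\sigma)}{\sqrt L}.
\]

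There is essentially no hard step here; the result is the classical Monte Carlo rate. The only subtlety to mention is that $A(p,\sigma)$ must be finite for the bound to be informative, i.e.\ the integrand $\u \mapsto D^p(\mathcal{R}_\u \mu * \mathcal{N}_\sigma, \mathcal{R}_\u \nu * \mathcal{N}_\sigma)$ must be square-integrable against $u_d$; this is implicit in the statement of the theorem via the very definition of $A(p,\sigma)$. Independence of the $\u_l$'s is what makes the variance decompose as a sum, and uniformity of their law on $\mathbb{S}^{d-1}$ is what identifies $\E[Z_1]$ with $G_\sigma SD^p(\mu,\nu)$ via Fubini's theorem.
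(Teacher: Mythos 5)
Your proof is correct and follows essentially the same route as the paper's: both pass from the $L^1$ error to the $L^2$ error (you via Jensen on $x\mapsto\sqrt{x}$, the paper via H\"older/Cauchy--Schwarz, which is the same bound), then identify the mean-squared error with the variance of the unbiased Monte Carlo average, which equals $A^2(p,\sigma)/L$ by independence of the $\u_l$. Your write-up is in fact more explicit than the paper's about why the variance decomposes and why $\E[Z_1]=G_\sigma SD^p(\mu,\nu)$, and the square-integrability caveat you mention is a fair remark, but there is no substantive difference in approach.
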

The term $A^2(p,\sigma)$ corresponds to the variance of $D^p(\mathcal{R}_{\u}\mu * \mathcal{N}_\sigma,\mathcal{R}_{\u} \nu* \mathcal{N}_\sigma)$ with respect to $\u \sim u_d$ drawn according to the uniform distribution over the unit-sphere $\mathbb{S}^{d-1}$.  It is worth to note that the precision of the Monte Carlo scheme approximation depends on the number of projections $L$ and the variance of the evaluations of the divergence $D^p.$ The estimation error decreases at the rate $L^{-1/2}$ according to  the number of projections used to compute the smoothed sliced divergence.

\begin{remark}Given the above results, we can provide a finer analysis of the Gaussian smoothed SWD sample complexity. 
For any $\mu, \nu \in \mathcal{P}_q(\R^d)$, the overall complexity of the Gaussian smoothed sliced Wasserstein distance is bounded by the sample and projection complexities, that is, 
\begin{equation*}
\complex(G_\sigma SWD^p(\mu,\nu)) = O\Big(\alpha_n(p,q,\sigma) + \frac{A(p,\sigma)}{\sqrt{L}}\Big).
\end{equation*}
If we consider the number of projections as $L = \lfloor n^{\beta}\rfloor $ for some $\beta \in (0, 1)$ then the overall complexity $\complex(G_\sigma SD^p(\mu,\nu)) = O(n^{-\beta/2})$. We further mention that complexity is  “interestingly” independent of the dimension $d.$

\end{remark}

\subsection{Noise-level dependencies}

When considered in sensitive applications requiring privacy preserving, the parameter $\sigma$ of the Gaussian smoothing function $\mathcal{N}_{\sigma}$ may significantly influence the attained privacy level. Hence, we provide theoretical results analyzing the effect of the noise level $\sigma$ on the induced Gaussian smoothed sliced divergence.

\paragraph{Order relation.} We first show that the noise level tends to reduce 
the difference between two distributions as measured using 
$G_\sigma S\text{D}^p(\mu,\nu)$ provided the base divergence $D$
satisfies some mild assumptions.

\begin{proposition}
\label{proposition:2-level-noise}
 Let $\mu$ and $\nu$ two distributions in $\cP(\R^d)$ and consider the noise levels
    $\sigma_1, \sigma_2$ such that 
    $0 \leq \sigma_1 \leq \sigma_2 < \infty$. Assume that the base divergence $D$ satisfies
    \begin{equation*}
      D^p(\mu'* \mathcal{N}_{\sigma_2}, \nu'* \mathcal{N}_{\sigma_2}) \leq 
   D^p(\mu' * \mathcal{N}_{\sigma_1}, \nu' * \mathcal{N}_{\sigma_1}), 
    \end{equation*}
for any $\mu', \nu' \in \mathcal{P}(\R).$ Then,
\begin{equation*}
G_{\sigma_2}S\text{D}^p(\mu,\nu) \leq G_{\sigma_1}S\text{D}^p(\mu,\nu).
  \end{equation*}  
\end{proposition}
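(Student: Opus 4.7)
The strategy is essentially a direct one-liner: the hypothesis is already stated as a pointwise inequality for arbitrary one-dimensional probability measures, and $G_\sigma SD^p$ is by definition the average of such pointwise divergences over directions in $\mathbb{S}^{d-1}$. So the plan is a short two-step argument followed by a remark motivating the hypothesis.

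First, fix an arbitrary direction $\u\in\mathbb{S}^{d-1}$. The Radon projections $\mathcal{R}_\u\mu$ and $\mathcal{R}_\u\nu$ are themselves elements of $\mathcal{P}(\R)$, so I can instantiate the assumption on $D$ with $\mu'=\mathcal{R}_\u\mu$ and $\nu'=\mathcal{R}_\u\nu$. This yields the pointwise (in $\u$) inequality
\begin{equation*}
D^p(\mathcal{R}_\u\mu*\mathcal{N}_{\sigma_2},\mathcal{R}_\u\nu*\mathcal{N}_{\sigma_2})\;\le\;D^p(\mathcal{R}_\u\mu*\mathcal{N}_{\sigma_1},\mathcal{R}_\u\nu*\mathcal{N}_{\sigma_1}).
\end{equation*}
Second, since $u_d$ is a (non-negative) probability measure on the sphere, integrating both sides against $u_d(\u)\,d\u$ preserves the inequality, and the two resulting integrals are exactly $G_{\sigma_2}SD^p(\mu,\nu)$ and $G_{\sigma_1}SD^p(\mu,\nu)$ by Definition of the Gaussian smoothed sliced divergence. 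This concludes the argument.

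There is no real obstacle in the derivation, which is why the content of the proposition sits entirely in the hypothesis. For the proof to be informative, I would add a brief remark explaining that the assumed monotonicity is standard for the base divergences used throughout the paper. For the Wasserstein distance, writing $\mathcal{N}_{\sigma_2}=\mathcal{N}_{\sigma_1}*\mathcal{N}_\tau$ with $\tau=\sqrt{\sigma_2^2-\sigma_1^2}$ and using the well-known contraction $W_p(\alpha*\rho,\beta*\rho)\le W_p(\alpha,\beta)$ (obtained by coupling independent samples of $\rho$) yields the required inequality. For MMD with a translation-invariant characteristic kernel, the same semigroup decomposition combined with the Fourier representation of MMD shows that the integrand is multiplied by a non-negative attenuation factor when one replaces $\sigma_1$ by $\sigma_2$, giving the inequality as well. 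This justifies that the proposition is non-vacuous and directly applies to the divergences used in the experiments.
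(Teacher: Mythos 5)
Your proof is correct and follows essentially the same route as the paper's: instantiate the assumed one-dimensional inequality with $\mu'=\mathcal{R}_\u\mu$ and $\nu'=\mathcal{R}_\u\nu$, then integrate over $\u\sim u_d$. Your added remark on when the hypothesis holds is a nice complement, though note the paper only cites the Wasserstein case and explicitly leaves the MMD and Sinkhorn cases as conjectures, so your sketched MMD argument goes slightly beyond what the authors claim to have established.
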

\begin{proof}
   For all $\u\in\mathbb{S}^{d-1}$ we have $\mathcal{R}_\u \mu, \mathcal{R}_\u \nu \in \mathcal{P}(\R)$. By application of the inequality of noise level satisfied by $D^p_p$ in one dimension we get 
\begin{equation*}
D^p( \mathcal{R}_\u \mu * \mathcal{N}_{\sigma_2}, \mathcal{R}_\u \nu * \mathcal{N}_{\sigma_2}) \leq 
D^p(\mathcal{R}_\u \mu * \mathcal{N}_{\sigma_1}, \mathcal{R}_\u \nu * \mathcal{N}_{\sigma_1}).   
\end{equation*}
Then, computing the expectation over the projections $\u$ since the divergence is non-negative 
concludes the proof. \end{proof}
Note that the assumption for the base divergence inequality holds for the Gaussian smoothed Wasserstein distance~\cite{nietert2021smooth}. While we conjecture that it holds also
for smoothed Sinkhorn and MMD, we leave the proofs for future works.

Based on the property in Proposition~\ref{proposition:2-level-noise}, we can  show some specific properties of the metric 
with respect to the noise level $\sigma$. 
\begin{proposition}
 $G_{\sigma}S\text{D}^p(\mu,\nu)$ is decreasing with respect to 
    $\sigma$ and we have
\begin{equation*}
\lim_{\sigma \rightarrow 0}  G_{\sigma}S\text{D}^p(\mu,\nu) = \text{D}^p(\mu,\nu).
\end{equation*}
\end{proposition}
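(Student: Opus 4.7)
The plan is to deduce both claims directly from Proposition~\ref{proposition:2-level-noise} and a weak-continuity argument, under the same standing assumption that the base divergence $D$ satisfies the monotonicity-in-noise inequality stated there and is continuous with respect to weak convergence (as is the case when $D$ metricizes the weak topology, consistent with the hypotheses used earlier in the paper).

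For the monotonicity in $\sigma$, I would simply observe that it is a direct corollary of Proposition~\ref{proposition:2-level-noise}: given any $0\leq \sigma_1\leq\sigma_2<\infty$, that proposition yields $G_{\sigma_2}SD^p(\mu,\nu)\leq G_{\sigma_1}SD^p(\mu,\nu)$, which is exactly the statement that $\sigma\mapsto G_\sigma SD^p(\mu,\nu)$ is non-increasing. No new computation is needed here.

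For the limit as $\sigma\to 0$, the strategy is to push the limit inside the integral over $\mathbb{S}^{d-1}$. First, for each fixed direction $\u$, as $\sigma\to 0$ the Gaussian $\mathcal{N}_\sigma$ converges weakly to $\delta_0$, so by standard properties of convolution with respect to weak convergence, $\mathcal{R}_\u\mu*\mathcal{N}_\sigma\rightharpoonup \mathcal{R}_\u\mu$ and similarly for $\nu$. Using the continuity of $D$ under weak convergence (inherited from the metrization-of-weak-topology assumption), this gives the pointwise convergence
\begin{equation*}
D^p(\mathcal{R}_\u\mu*\mathcal{N}_\sigma,\mathcal{R}_\u\nu*\mathcal{N}_\sigma)\longrightarrow D^p(\mathcal{R}_\u\mu,\mathcal{R}_\u\nu)
\end{equation*}
for every $\u\in\mathbb{S}^{d-1}$. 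Second, by the one-dimensional version of the monotonicity assumption on $D$ (the very hypothesis of Proposition~\ref{proposition:2-level-noise}), the integrand is non-negative and monotone non-increasing in $\sigma$, so as $\sigma\to 0$ it is monotone non-decreasing. I would then invoke the monotone convergence theorem to interchange the limit and the uniform integral over $\mathbb{S}^{d-1}$, yielding the sliced (unsmoothed) divergence in the limit, which matches the right-hand side of the statement.

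The main obstacle is a point of rigour rather than difficulty: the result depends on an unstated continuity/regularity property of the base divergence $D$ with respect to weak convergence of measures, and the equality $\lim_{\sigma\to 0}G_\sigma SD^p(\mu,\nu)=D^p(\mu,\nu)$ as written should be interpreted as the (unsmoothed) sliced divergence $\int_{\mathbb{S}^{d-1}}D^p(\mathcal{R}_\u\mu,\mathcal{R}_\u\nu)\,u_d(\u)d\u$, since for $\mu,\nu\in\mathcal{P}(\R^d)$ with $d\geq 2$ the base $D$ is only defined on $\mathcal{P}(\R)$. Apart from clarifying this interpretation and the continuity hypothesis, the proof is essentially a one-line application of Proposition~\ref{proposition:2-level-noise} combined with monotone convergence.
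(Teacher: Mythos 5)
Your proof is correct, and it is actually more complete than the paper's own argument. For the monotonicity you do exactly what the paper does: invoke Proposition~\ref{proposition:2-level-noise}. For the limit, however, the paper's proof is a single sentence (``take $\sigma_2=0$ and let $\sigma_1\rightarrow 0$'' --- note the indices appear swapped there, since $\sigma_1\leq\sigma_2$), which only yields that $G_\sigma SD^p(\mu,\nu)$ is non-decreasing as $\sigma\downarrow 0$ and bounded above by the unsmoothed sliced divergence; monotonicity alone guarantees the limit exists but not that it equals the value at $\sigma=0$. You supply the missing ingredient: weak convergence $\mathcal{N}_\sigma\rightharpoonup\delta_0$, hence $\mathcal{R}_\u\mu*\mathcal{N}_\sigma\rightharpoonup\mathcal{R}_\u\mu$, continuity of $D$ under weak convergence (a reasonable reading of the paper's standing assumption that $D$ metricizes the weak topology, combined with the triangle inequality for joint continuity), and then monotone convergence to pass the limit through the integral over $\mathbb{S}^{d-1}$. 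You are also right that the right-hand side should be read as the sliced divergence $\int_{\mathbb{S}^{d-1}}D^p(\mathcal{R}_\u\mu,\mathcal{R}_\u\nu)\,u_d(\u)d\u$ rather than the $d$-dimensional base divergence, which is consistent with the paper's own remark that $G_\sigma SD^p$ ``recovers the sliced divergence'' as the noise vanishes. In short: same skeleton as the paper, but your version makes explicit (and repairs) the right-continuity-at-zero step that the paper glosses over, at the cost of one additional continuity hypothesis on $D$ that you correctly flag as unstated.
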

\begin{proof}
    The proof comes straightforwardly from Proposition~\ref{proposition:2-level-noise} by taking $\sigma_2 = 0$   and letting
    $\sigma_1 \rightarrow 0$. 
\end{proof}
This property interestingly states that the $G_{\sigma}S\text{D}^p$ recovers the sliced divergence when the noise level vanishes. 
{We end up this section by providing a relation between Gaussian smoothed sliced Wasserstein distances under two noise levels. Proof of Proposition \ref{proposition:GS-SWD_sigma_1_2} is postponed to the appendix.}
\begin{proposition}\label{proposition:GS-SWD_sigma_1_2}
Let $0\leq \sigma_1\leq \sigma_2$ be two noise levels. Then, one has
\begin{align*}
G_{\sigma_1} SWD^p(\mu,\nu) &\leq  2^{p-1} G_{\sigma_2} SWD^p(\mu,\nu)\\
&\qquad+ \frac{2\pi^{d/2}}{\Gamma(d/2)}2^{3p/2} (\sigma_2^2 - \sigma_1^2)^{2p},
\end{align*}
where $\Gamma:\R \rightarrow \R$ is the Gamma function expressed as $\Gamma(v) = \int_0^\infty t^{v-1}e^{-t}dt$. 
\end{proposition}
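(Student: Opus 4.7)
The plan is to use the semigroup property of Gaussian convolutions together with the $W_p$ triangle inequality applied in each projection direction, and then to integrate over the sphere.

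First I would set $\tau = \sqrt{\sigma_2^2 - \sigma_1^2}$ so that $\mathcal{N}_{\sigma_2} = \mathcal{N}_{\sigma_1} * \mathcal{N}_\tau$, since the variances of independent centered Gaussians add under convolution. For any fixed $\u \in \mathbb{S}^{d-1}$, abbreviating $\alpha_i = \mathcal{R}_\u \mu * \mathcal{N}_{\sigma_i}$ and $\beta_i = \mathcal{R}_\u \nu * \mathcal{N}_{\sigma_i}$ for $i\in\{1,2\}$, this gives $\alpha_2 = \alpha_1 * \mathcal{N}_\tau$ and $\beta_2 = \beta_1 * \mathcal{N}_\tau$. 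The triangle inequality for the metric $W_p$ on $\mathcal{P}_p(\R)$ then yields
$$W_p(\alpha_1,\beta_1) \leq W_p(\alpha_1,\alpha_2) + W_p(\alpha_2,\beta_2) + W_p(\beta_2,\beta_1).$$

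Next, I raise this to the $p$-th power via iterated applications of $(a+b)^p \leq 2^{p-1}(a^p + b^p)$, isolating $W_p^p(\alpha_2,\beta_2)$ with coefficient $2^{p-1}$ (matching the first term of the stated bound) and collecting the two ``translation'' terms with a combinatorial constant compatible with $2^{3p/2}$. Each translation term is then controlled by the explicit coupling $X \mapsto (X, X+Z)$, where $X \sim \alpha_1$ and $Z \sim \mathcal{N}_\tau$ are independent: this gives $W_p^p(\alpha_1,\alpha_2) \leq \mathbb{E}|Z|^p = M_p(\mathcal{N}_\tau)$, a closed-form function of $\tau$ (and hence of $\sigma_2^2-\sigma_1^2$) via the absolute moments of a centered Gaussian, with the analogous bound for $W_p^p(\beta_1,\beta_2)$. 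Since these bounds are uniform in $\u$, integrating against $u_d(\u)\,d\u$ on $\mathbb{S}^{d-1}$ converts the central term into $2^{p-1}G_{\sigma_2}SWD^p(\mu,\nu)$ and leaves a single constant depending only on $p$, $d$, and $\tau$.

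The main obstacle is reconciling the resulting constant with the claimed form $\frac{2\pi^{d/2}}{\Gamma(d/2)}\,2^{3p/2}(\sigma_2^2 - \sigma_1^2)^{2p}$: the elementary translation coupling naturally produces a $\tau^p = (\sigma_2^2-\sigma_1^2)^{p/2}$ dependence, and $u_d$ is a probability measure, so at face value no surface-area factor should appear. Closing this gap likely requires either a looser intermediate estimate that passes from $M_p(\mathcal{N}_\tau)$ to a higher-order moment (for instance by a Cauchy--Schwarz or Jensen step that squares the moment bound), or a Fourier/characteristic-function control of $W_p$ between Gaussian-smoothed measures in the spirit of Nietert \emph{et al.}, combined with writing $u_d$ relative to the unnormalized surface measure $|\mathbb{S}^{d-1}| = 2\pi^{d/2}/\Gamma(d/2)$ at an intermediate bookkeeping step. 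Once these constants are aligned, the rest of the argument reduces to the routine triangle-plus-coupling mechanism sketched above.
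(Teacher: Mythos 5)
Your core mechanism is sound and is close in spirit to the paper's, but the two arguments are routed differently. The paper also starts from the semigroup identity $\mathcal{N}_{\sigma_2} = \mathcal{N}_{\sigma_1} * \mathcal{N}_{\sqrt{\sigma_2^2-\sigma_1^2}}$, but instead of invoking the $W_p$ triangle inequality and then raising to the power $p$, it works at the level of a single coupling of $(X_\u, Z_X, Z'_X)$ and $(Y_\u, Z_Y, Z'_Y)$ and applies the one-sided inequality $\E[|X|^p] \leq 2^{p-1}\E[|X+Y|^p] + 2^{p-1}\E[|Y|^p]$ with $X$ the $\sigma_1$-smoothed difference and $Y = Z'_X - Z'_Y$, then takes the infimum over couplings. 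This yields $W_p^p(\alpha_1,\beta_1) \leq 2^{p-1}W_p^p(\alpha_2,\beta_2) + C\,\E[|Z'_X|^p]$ directly, with the residual controlled by the $p$-th absolute moment of the extra Gaussian, exactly as in your translation-coupling step. Your route (triangle inequality, then iterated $(a+b)^p \leq 2^{p-1}(a^p+b^p)$) produces the same structure with a slightly different combinatorial constant on the residual term ($2^{2p-2}$ rather than $2^{2p-1}$-type factors); both are legitimate.

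The ``obstacle'' you flag at the end is not a defect of your argument but of the paper's. The paper's own proof uses $\E[|Z|^p] = \frac{2^p\Gamma((p+1)/2)}{\Gamma(1/2)}\sigma^{2p}$, which is dimensionally incorrect (the $p$-th absolute moment of $\mathcal{N}_\sigma$ scales as $\sigma^p$, as your coupling correctly gives), and it multiplies the residual by $u_d(\mathbb{S}^{d-1}) = 2\pi^{d/2}/\Gamma(d/2)$ even though $u_d$ is defined in the paper as the uniform \emph{probability} measure on the sphere, so this factor should be $1$; even granting both of these, $\sigma^{2p}$ with $\sigma = \sqrt{\sigma_2^2-\sigma_1^2}$ gives $(\sigma_2^2-\sigma_1^2)^{p}$, not the stated $(\sigma_2^2-\sigma_1^2)^{2p}$. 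So do not contort your proof to reproduce the stated constant: your version, with residual of order $2^{2p-1}C_p(\sigma_2^2-\sigma_1^2)^{p/2}$ and no surface-area factor, is the correct form of the bound, and the only genuine difference from the paper is that the paper's bookkeeping of the constant is erroneous.
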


The above proposition allows to control the variation of the $G_{\sigma} SWD$ divergence with respects to the amount of Gaussian smoothing. 

\begin{remark} All these properties hold for the population case. When considering empirical approximation
	of the true distribution, it may not hold due to the impact 
	of $\mathcal{N}_\sigma$ over the sample complexity. 
\end{remark}

\begin{figure}
	\centering
        \includegraphics[width=7cm]{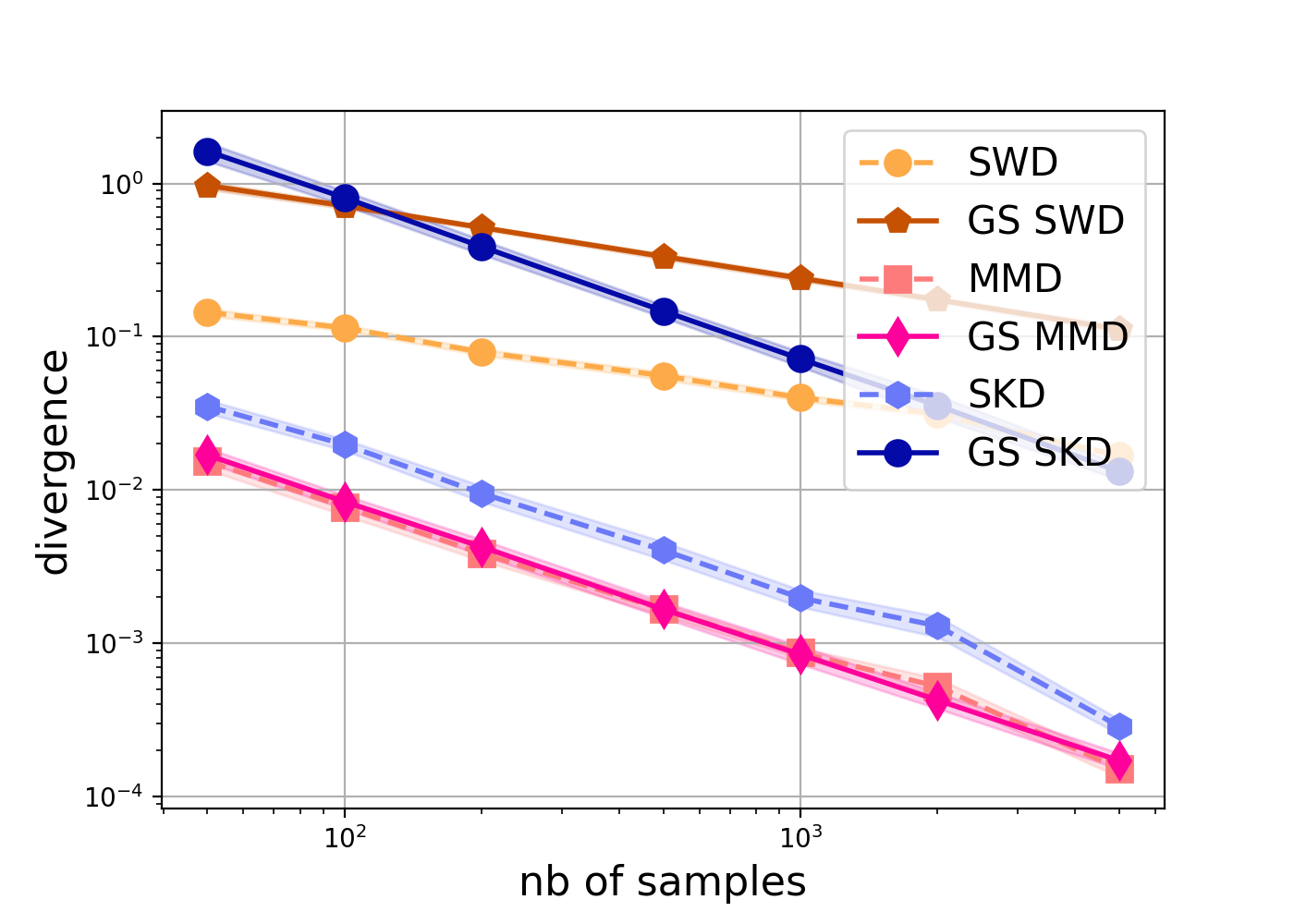} 
    \caption{Measuring the divergence between two sets of samples drawn iid from the CIFAR10
    dataset.
     We compare three sliced divergences and their Gaussian smoothed versions
     with a $\sigma=3$.\label{fig:cifar}}
\end{figure}

\begin{figure}[t]
	\centering
	\includegraphics[width=7cm]{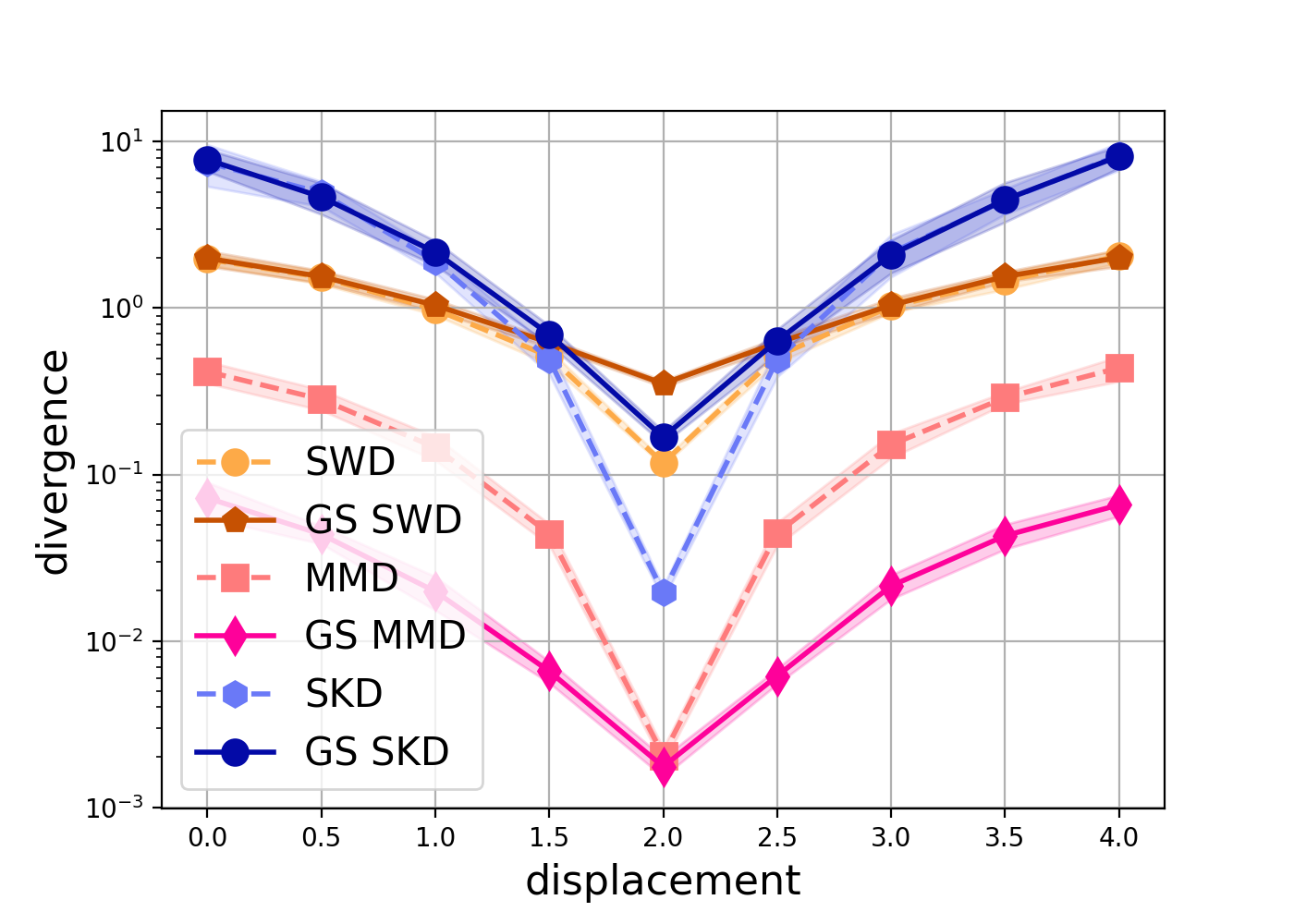}   
	\caption{Measuring the divergence between two sets of samples in $\R^{50}$, one
		with mean {$2  \mathbf{1}_d$} and the other with mean {$s \mathbf{1}_d$} 		with increasing $s$.
		We compare three sliced divergences and their Gaussian smoothed version
		with a $\sigma=3$.
		\label{fig:divdisplacement}}  
\end{figure}

\begin{figure}
	\centering
	\includegraphics[width=7cm]{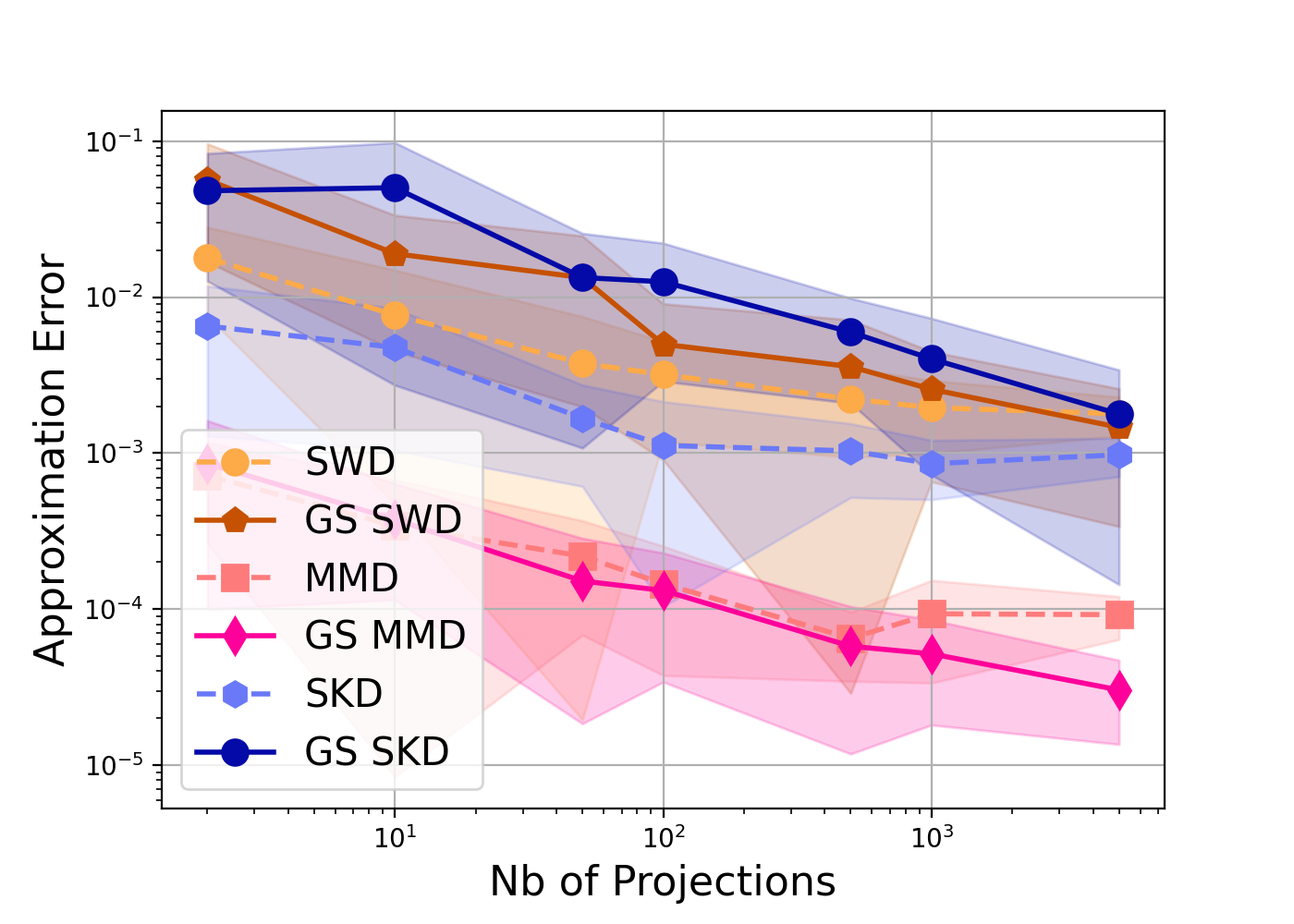}   
	\caption{Absolute difference between the approximated Monte-carlo approximation
		of all divergences compared to the true one (evaluated with $10,000$ number of projections).
		The  two sets of $500$ samples in $\R^{50}$ are  randomly drawn from $\mathcal{N}(0,\mathbf{I})$.
		The Gaussian smoothed divergences are parameterized with $\sigma=3$.
		\label{fig:approx}}  
\end{figure}
    \section{Numerical Experiments}  \label{sec:expe}
    \label{sec:expe}
    In this section, we report on a serie of experiments that support the theoretical results established in the previous section. We also highlight the usefulness of the findings     in a context of privacy-
    preserving domain adaptation problem.

\subsection{Supporting the theorical results}

\paragraph{Sample complexity}
The first  experiment (see Figure \ref{fig:divsamples}) analyzes the sample complexity of the different Gaussian smoothed 
sliced divergences.  It shows that the sample complexity stays similar to the one of their original 
and sliced counterparts up to a constant (see Theorem \ref{theorem:sample_complexity}). 
For this purpose, we have considered samples in $\mathbb{R}^d$ randomly drawn from a Normal distribution 
$\mathcal{N}(0,\mathbf{I})$.   For the Sinkhorn divergence, the entropy regularization has been set
to $0.1$ and for MMD, we used a Gaussian kernel for which the bandwidth has been set
to the mean of all pairwise distances between samples. 
The number of projections has been fixed
to $L=50$ and we perform 20 runs per experiment. For the first study, the convergence rate has been evaluated by increasing the samples number
up to 25,000 with fixed dimension $d=50$. For the second one, we vary both the dimension and the number of samples.

Figure \ref{fig:divsamples} shows the sample complexity of some sliced divergences, respectively noted as SWD, SKD and MMD for Sliced Wasserstein distance, Sinkhorn divergence and Maximum Mean discrepancy)
and their Gaussian-smoothed version, named as GS SWD, GS SKD and GS MMD. On the left plot, we can see that all 
Gaussian smoothed divergences preserve the complexity rate with just a slight to moderate overhead. The worst
difference is for Sinkhorn divergence, while smoothed MMD almost comes for free
in term of complexity. From the right plot where sample complexities for different dimensions $d$
are given, we confirm the finding that  Gaussian smoothing keeps the independence
of the convergence rate to the dimension of sliced divergences. 
We have also evaluated the sample complexity for the CIFAR dataset by sampling sets of increasing size.
Results reported in Figure \ref{fig:cifar} confirms the findings obtained from the toy dataset.

\paragraph{Identity of indescernibles}
The second experiment aims at checking whether our divergences converge towards a small value
when the distributions to be compared are the same. For this, { we consider samples from distributions $\mu$ and $\nu$  chosen as normal distributions with respectively}   mean $2 \times \mathbf{1}_d$ and $s \mathbf{1}_d$ with
varying $s$ (noted as the displacement). Results are depicted in Figure \ref{fig:divdisplacement}. We can see
that all methods are able to attain their minimum when $s=2$. Interestingly, the gap between
the Gaussian smoothed and non-smoothed divergences for Wasserstein and Sinkhorn is almost
indiscernible as the distance between distribution increases.

\paragraph{Projection complexity}
We have also investigated the impact of the number of projections when estimating the distance
between two sets of $500$ samples drawn from the same distribution, $\mathcal{N}(0,\mathbf{I})$.
Figure  \ref{fig:approx} plots the approximation error  between the true expectation of the
sliced divergences (computed for a number of $L=10,000$ projections) and its approximated
versions. We remark that, for all methods, the error ranges within $10$-fold when approximating
with $50$ projections and decreases with the number of projections. 

\paragraph{Impact of the noise parameter.} Since the Gaussian smoothing parameter
is  key in a privacy preserving context, as it impacts on the level of privacy of the Gaussian
mechanism, we have analyzed its impact of the  smoothed sliced divergence.
We have reproduced the experiment for the sample complexity but with different values
of $\sigma$. The number of projections has been set to $50$. 
Figure \ref{fig:samplenoise} shows these sample complexities. The first very interesting
point to note is that the smoothing parameter has almost no effect on the MMD sample
complexity. For the Gaussian smoothed SWD and Sinkhorn divergences, instead, the smoothing
tends to increase the divergence at fixed number of samples. Another interpretation
is that to achieve a given value of divergence, one needs more far samples when the
smoothing is larger (\emph{i.e} for getting a given divergence value at $\sigma=5$, one needs almost
$10$-fold more samples for $\sigma=15$). 
This overhead of samples needed when smoothing increases is properly
described, for the Gaussian smoothed SWD in our Proposition \ref{corollary:sampComplexGSWD}, as the sample complexity depends
on the moments of the Gaussian.

\begin{figure}
	\centering
    \includegraphics[width=7cm]{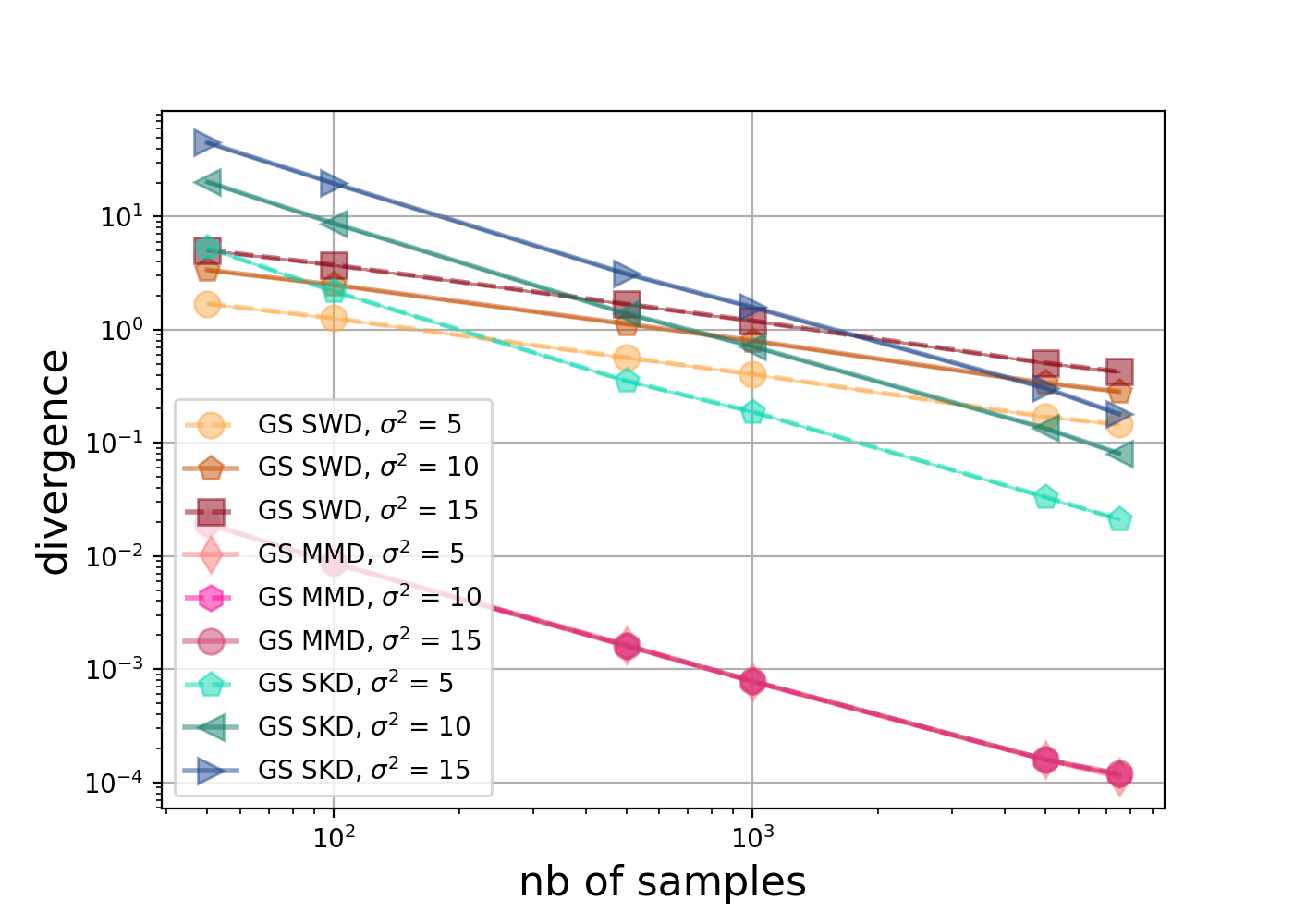}   
    \caption{Measuring the divergence between two sets of samples in $\R^{50}$ drawn
    from   $\mathcal{N}(0,\mathbf{I})$. We plot the sample complexity for different Gaussian smoothed 
    divergence at different level of noises.
    \label{fig:samplenoise}}
\end{figure}

As for conclusion from these analyses, we highlight that the Gaussian smoothed Sliced MMD seems
to present several strong benefits : its sample complexity does not depend on 
the dimension and seems to be the best one among the
divergence we considered. More interestingly, it is not impacted by the amount
of Gaussian smoothing and thus not impacted by a desired privacy level.

\subsection{Domain adaptation with Gaussian Smoothed Sliced Divergence}
\begin{figure*}[t]
	\begin{center}
		\includegraphics[width=5.6cm]{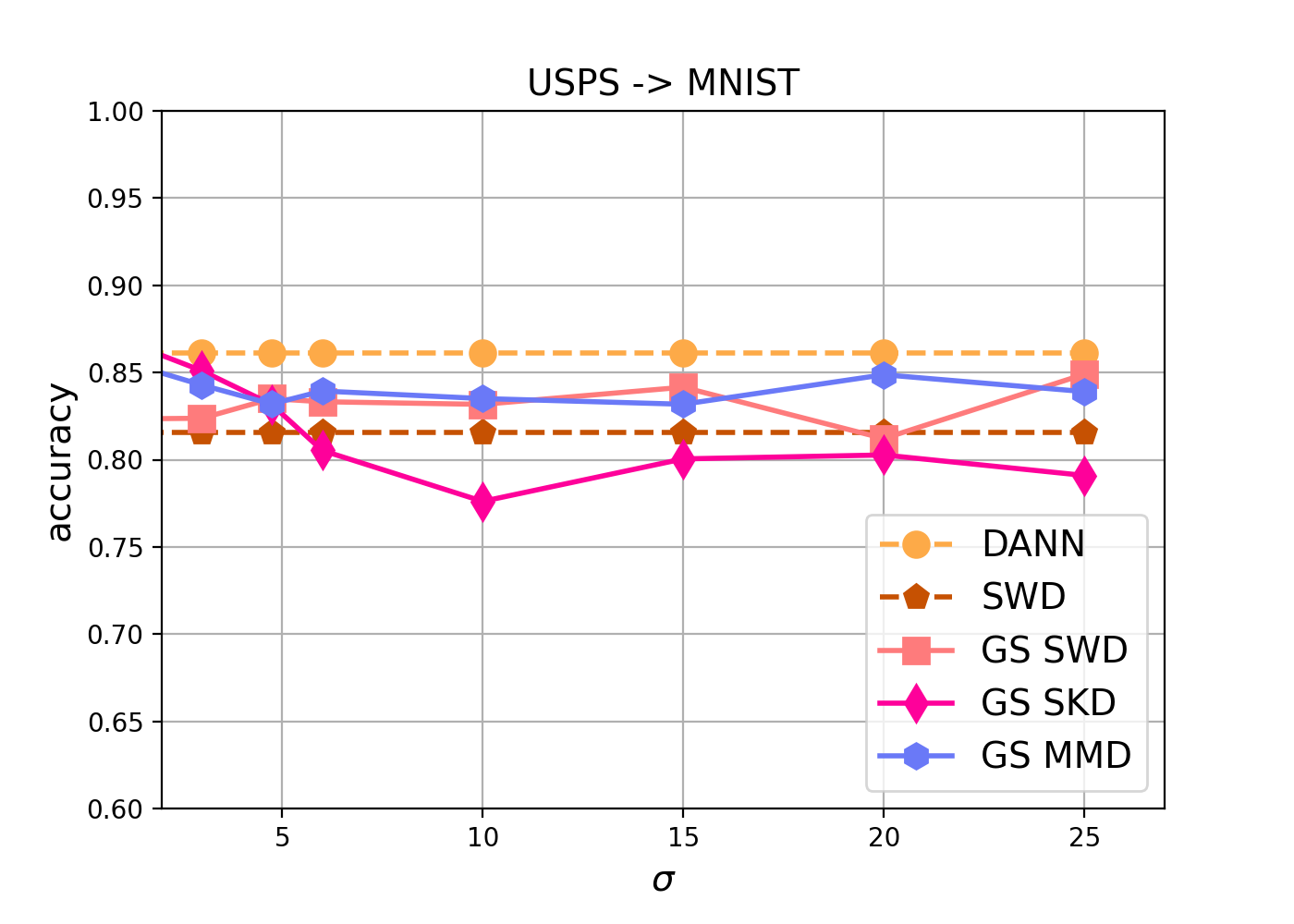}
		\includegraphics[width=5.6cm]{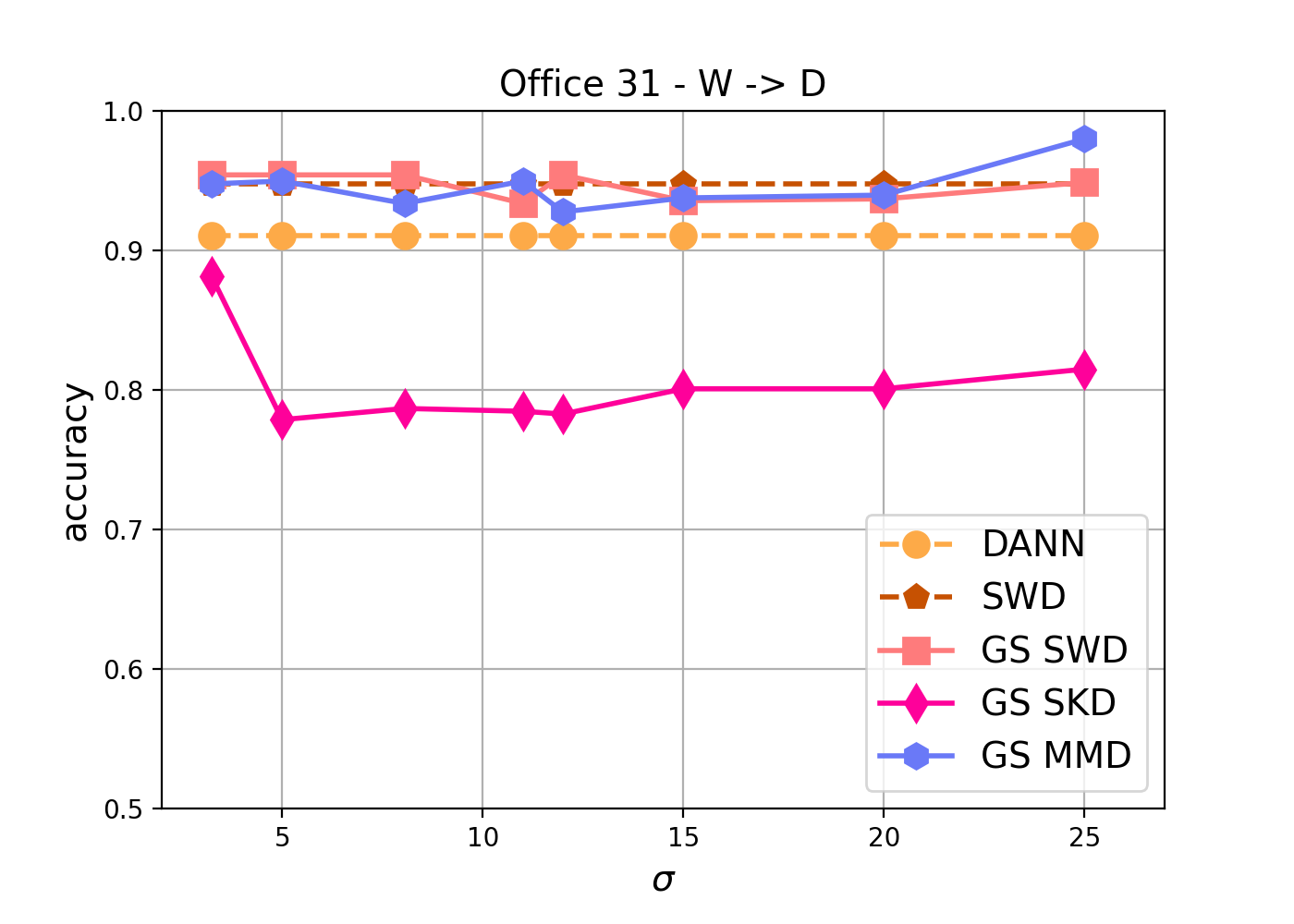}
		\includegraphics[width=5.6cm]{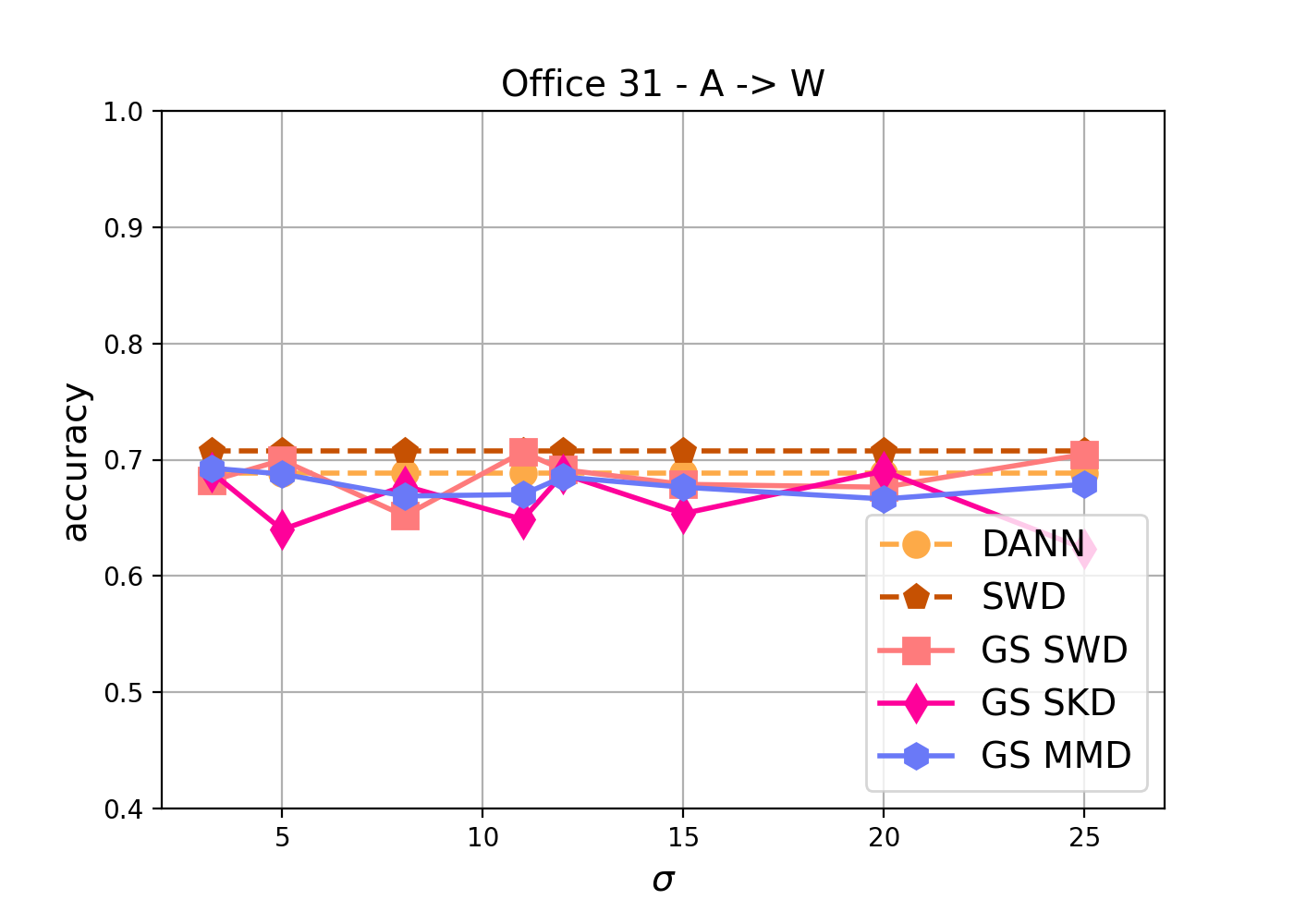}
	\end{center}
	\caption{Domain adaptation performances using different divergences on distributions with respect to the Gaussian smoothing. (left) USPS to MNIST. (middle) Office-31 Webcam to DSLR. (right) Office-31 Amazon to Webcam. \label{fig:da} }
\end{figure*}
As an application, we have considered the problem of unsupervised domain adaptation for a classification task. In this
context, given source examples $\X_s$ and their label $\y_s$ and unlabeled
	target examples $\X_t$, our goal is to design a classifier $h(\cdot)$ learned from 
	the source examples that generalizes well on the target ones. A classical
    approach consists in learning a representation mapping  $g(\cdot)$ that leads to
     invariant latent representations, invariance being measured as a distance
      between empirical distributions of mapped source and target samples. 
      Formally, this leads to the following  problem
	$$	
      \min_{g,h} L_c(h(g(\X_s)),\y_s) + \mathcal{D}(g(\X_s), g(\X_t))
	$$
      where $L_c$ can be the cross-entropy loss or a quadratic loss and 
    $\mathcal{D}$ a divergence between empirical distributions,
   in our case, $\mathcal{D}$ will be any Gaussian smoothed sliced divergence.
     We solve this problem through stochastic gradient descent, 
     similarly to many approaches that use Sliced Wasserstein Distance as a 
     distribution distance \cite{lee2019sliced}.
Note that, in practice,  using a smoothed divergence preserves the privacy of the 
target samples as shown by \citet{pmlr-v139-rakotomamonjy21a}.

Our experiments evaluate the studied  Gaussian-smoothed sliced divergences in classical unsupervised domain adaptation.
 We have considered twi datasets: 
a handwritten digit recognition (USPS/MNIST) and Office 31 datasets. Our goal is to analyze how our divergences 
 perform compared with non-smoothed divergences. The first one is the 
  Sliced Wasserstein Distance (SWD) \cite{lee2019sliced} and the second one is the Jenssen-Shannon approximation
  based on adversarial approach, known as DANN \cite{ganin2015unsupervised}. 
  For all methods and for each dataset, we used
the same neural network architecture for representation mapping and for
classification. Approaches differ only on how distance between distributions have been computed.

Results are depicted in Figure \ref{fig:da}. For the two problems, we can see that performances obtained with the Gaussian smoothed sliced Wasserstein or MMD divergences are similar to those obtained with DANN or SWD across all ranges of noise. 
The smoothed version of Sinkhorn is less stable and induces
a slight loss of performance. Owing to the metric property and
the induced weak topology, the privacy preservation comes almost
without loss of performance in this domain adaptation context.

\section{Conclusion}
\label{sec:conclu}
In this study, we have analyzed the properties of Gaussian smoothed sliced divergence for
comparing distributions as they play a crucial role in a privacy 
preserving context. We have derived several theoretical results related
to their topological and statistical properties. More precisely, 
we have shown that under mild condition on their base divergence, the smoothing
and slicing operations preserves metric property. From a statistical point of
view, we have shown that sample complexity does not depend on the dimension of
the problem and follows a similar complexity than their sliced version, although
some overhead may have to be paid due to the smoothing. 
We have illustrated those theoretical findings through 
some experimental analyses on toy problem. We have also
analyzed the behavior of our divergence on domain adaptation
problems and confirm the fact that using those divergences
yields only to slight loss of performances while preserving privacy.
One lesson we have also learnt is that Gaussian smoothed sliced MMD seems to present several strong benefits in term of sample complexity.

\providecommand{\CH}{{C.-H}}\providecommand{\JB}{{J.-B}}

 \bibliographystyle{icml2021} \clearpage

\begin{thebibliography}{0}
\providecommand{\natexlab}[1]{#1}
\providecommand{\url}[1]{\texttt{#1}}
\expandafter\ifx\csname urlstyle\endcsname\relax
  \providecommand{\doi}[1]{doi: #1}\else
  \providecommand{\doi}{doi: \begingroup \urlstyle{rm}\Url}\fi

\end{thebibliography}


\begin{thebibliography}{31}
\providecommand{\natexlab}[1]{#1}
\providecommand{\url}[1]{\texttt{#1}}
\expandafter\ifx\csname urlstyle\endcsname\relax
  \providecommand{\doi}[1]{doi: #1}\else
  \providecommand{\doi}{doi: \begingroup \urlstyle{rm}\Url}\fi

\bibitem[Arjovsky et~al.(2017)Arjovsky, Chintala, and
  Bottou]{pmlr-v70-arjovsky17a}
Arjovsky, M., Chintala, S., and Bottou, L.
\newblock {W}asserstein generative adversarial networks.
\newblock In Precup, D. and Teh, Y.~W. (eds.), \emph{Proceedings of the 34th
  International Conference on Machine Learning}, volume~70 of \emph{Proceedings
  of Machine Learning Research}, pp.\  214--223, International Convention
  Centre, Sydney, Australia, 2017. PMLR.

\bibitem[Bonneel \& Coeurjolly(2019)Bonneel and Coeurjolly]{bonneel2019spot}
Bonneel, N. and Coeurjolly, D.
\newblock Spot: Sliced partial optimal transport.
\newblock 38\penalty0 (4), 2019.

\bibitem[Bonneel et~al.(2011)Bonneel, van~de Panne, Paris, and
  Heidrich]{bonnel2011}
Bonneel, N., van~de Panne, M., Paris, S., and Heidrich, W.
\newblock Displacement interpolation using lagrangian mass transport.
\newblock \emph{ACM Trans. Graph.}, 30\penalty0 (6):\penalty0 158:1--158:12,
  2011.

\bibitem[Bonnotte(2013)]{bonnotte:tel-00946781}
Bonnotte, N.
\newblock \emph{{Unidimensional and Evolution Methods for Optimal
  Transportation}}.
\newblock Theses, {Universit{\'e} Paris Sud - Paris XI ; Scuola normale
  superiore (Pise, Italie)}, December 2013.

\bibitem[Courty et~al.(2016)Courty, Flamary, Tuia, and
  Rakotomamonjy]{courty2016optimal}
Courty, N., Flamary, R., Tuia, D., and Rakotomamonjy, A.
\newblock Optimal transport for domain adaptation.
\newblock \emph{IEEE transactions on pattern analysis and machine
  intelligence}, 39\penalty0 (9):\penalty0 1853--1865, 2016.

\bibitem[Cuturi(2013)]{cuturi2013sinkhorn}
Cuturi, M.
\newblock Sinkhorn distances: Lightspeed computation of optimal transport.
\newblock In Burges, C. J.~C., Bottou, L., Welling, M., Ghahramani, Z., and
  Weinberger, K.~Q. (eds.), \emph{Advances in Neural Information Processing
  Systems}, volume~26, pp.\  2292--2300. Curran Associates, Inc., 2013.

\bibitem[Deshpande et~al.(2019)Deshpande, Hu, Sun, Pyrros, Siddiqui, Koyejo,
  Zhao, Forsyth, and Schwing]{max-SW}
Deshpande, I., Hu, Y.-T., Sun, R., Pyrros, A., Siddiqui, N., Koyejo, S., Zhao,
  Z., Forsyth, D., and Schwing, A.~G.
\newblock Max-sliced {W}asserstein distance and its use for gans.
\newblock In \emph{2019 IEEE/CVF Conference on Computer Vision and Pattern
  Recognition (CVPR)}, pp.\  10640--10648, 2019.

\bibitem[Fournier \& Guillin(2015)Fournier and Guillin]{fournier2015}
Fournier, N. and Guillin, A.
\newblock On the rate of convergence in {W}asserstein distance of the empirical
  measure.
\newblock \emph{Probability Theory and Related Fields}, 162\penalty0
  (3):\penalty0 707--738, Aug 2015.

\bibitem[Ganin \& Lempitsky(2015)Ganin and Lempitsky]{ganin2015unsupervised}
Ganin, Y. and Lempitsky, V.
\newblock Unsupervised domain adaptation by backpropagation.
\newblock In \emph{International conference on machine learning}, pp.\
  1180--1189. PMLR, 2015.

\bibitem[Genevay et~al.(2018)Genevay, Peyr{\'e}, and
  Cuturi]{genevay2018learning}
Genevay, A., Peyr{\'e}, G., and Cuturi, M.
\newblock Learning generative models with sinkhorn divergences.
\newblock In \emph{International Conference on Artificial Intelligence and
  Statistics}, pp.\  1608--1617. PMLR, 2018.

\bibitem[Gretton et~al.(2012)Gretton, Borgwardt, Rasch, Sch{\"o}lkopf, and
  Smola]{gretton2012kernel}
Gretton, A., Borgwardt, K.~M., Rasch, M.~J., Sch{\"o}lkopf, B., and Smola, A.
\newblock A kernel two-sample test.
\newblock \emph{The Journal of Machine Learning Research}, 13\penalty0
  (1):\penalty0 723--773, 2012.

\bibitem[Kantorovich(1942)]{kantorovich1942}
Kantorovich, L.
\newblock On the transfer of masses (in russian).
\newblock \emph{Doklady Akademii Nauk}, 2:\penalty0 227--229, 1942.

\bibitem[{Kolouri} et~al.(2016){Kolouri}, {Zou}, and
  {Rohde}]{kolouri2016slicedkernels}
{Kolouri}, S., {Zou}, Y., and {Rohde}, G.~K.
\newblock Sliced {W}asserstein kernels for probability distributions.
\newblock In \emph{2016 IEEE Conference on Computer Vision and Pattern
  Recognition (CVPR)}, pp.\  5258--5267, 2016.

\bibitem[Kolouri et~al.(2017)Kolouri, Park, Thorpe, Slepcev, and
  Rohde]{klouri17}
Kolouri, S., Park, S.~R., Thorpe, M., Slepcev, D., and Rohde, G.~K.
\newblock Optimal mass transport: Signal processing and machine-learning
  applications.
\newblock \emph{IEEE Signal Processing Magazine}, 34\penalty0 (4):\penalty0
  43--59, July 2017.

\bibitem[Kolouri et~al.(2019{\natexlab{a}})Kolouri, Nadjahi, Simsekli, Badeau,
  and Rohde]{kolouri2019gensliced}
Kolouri, S., Nadjahi, K., Simsekli, U., Badeau, R., and Rohde, G.
\newblock Generalized sliced {W}asserstein distances.
\newblock In Wallach, H., Larochelle, H., Beygelzimer, A., d\textquotesingle
  Alch\'{e}-Buc, F., Fox, E., and Garnett, R. (eds.), \emph{Advances in Neural
  Information Processing Systems}, volume~32, pp.\  261--272. Curran
  Associates, Inc., 2019{\natexlab{a}}.

\bibitem[Kolouri et~al.(2019{\natexlab{b}})Kolouri, Pope, Martin, and
  Rohde]{kolouri2018sliced}
Kolouri, S., Pope, P.~E., Martin, C.~E., and Rohde, G.~K.
\newblock Sliced {W}asserstein auto-encoders.
\newblock In \emph{International Conference on Learning Representations},
  2019{\natexlab{b}}.

\bibitem[Lee et~al.(2019)Lee, Batra, Baig, and Ulbricht]{lee2019sliced}
Lee, C.-Y., Batra, T., Baig, M.~H., and Ulbricht, D.
\newblock Sliced wasserstein discrepancy for unsupervised domain adaptation.
\newblock In \emph{Proceedings of the IEEE/CVF Conference on Computer Vision
  and Pattern Recognition}, pp.\  10285--10295, 2019.

\bibitem[Long et~al.(2015)Long, Cao, Wang, and Jordan]{long2015learning}
Long, M., Cao, Y., Wang, J., and Jordan, M.
\newblock Learning transferable features with deep adaptation networks.
\newblock In \emph{International conference on machine learning}, pp.\
  97--105. PMLR, 2015.

\bibitem[Monge(1781)]{monge1781}
Monge, G.
\newblock Mémoire sur la théotie des déblais et des remblais.
\newblock \emph{Histoire de l'Académie Royale des Sciences}, pp.\  666--704,
  1781.

\bibitem[Nadjahi et~al.(2020)Nadjahi, Durmus, Chizat, Kolouri, Shahrampour, and
  Simsekli]{nadjahi2020}
Nadjahi, K., Durmus, A., Chizat, L., Kolouri, S., Shahrampour, S., and
  Simsekli, U.
\newblock Statistical and topological properties of sliced probability
  divergences.
\newblock In Larochelle, H., Ranzato, M., Hadsell, R., Balcan, M., and Lin, H.
  (eds.), \emph{Advances in Neural Information Processing Systems 33: Annual
  Conference on Neural Information Processing Systems 2020, NeurIPS 2020,
  December 6-12, 2020, virtual}, 2020.

\bibitem[Nguyen et~al.(2020)Nguyen, Ho, Pham, and
  Bui]{nguyen2020distributional}
Nguyen, K., Ho, N., Pham, T., and Bui, H.
\newblock Distributional sliced-{W}asserstein and applications to generative
  modeling.
\newblock \emph{arxiv preprint 2002.07367}, 2020.

\bibitem[Nietert et~al.(2021{\natexlab{a}})Nietert, Goldfeld, and
  Kato]{nietert2021smooth}
Nietert, S., Goldfeld, Z., and Kato, K.
\newblock From smooth wasserstein distance to dual sobolev norm: Empirical
  approximation and statistical applications.
\newblock \emph{arXiv preprint arXiv:2101.04039}, 2021{\natexlab{a}}.

\bibitem[Nietert et~al.(2021{\natexlab{b}})Nietert, Goldfeld, and
  Kato]{pmlr-v139-nietert21a}
Nietert, S., Goldfeld, Z., and Kato, K.
\newblock Smooth $p$-wasserstein distance: Structure, empirical approximation,
  and statistical applications.
\newblock In Meila, M. and Zhang, T. (eds.), \emph{Proceedings of the 38th
  International Conference on Machine Learning}, volume 139 of
  \emph{Proceedings of Machine Learning Research}, pp.\  8172--8183. PMLR,
  18--24 Jul 2021{\natexlab{b}}.

\bibitem[Peyré \& Cuturi(2019)Peyré and Cuturi]{peyre2019COTnowpublisher}
Peyré, G. and Cuturi, M.
\newblock Computational optimal transport.
\newblock \emph{Foundations and Trends® in Machine Learning}, 11\penalty0
  (5-6):\penalty0 355--607, 2019.

\bibitem[Rachev \& R{\"u}schendorf(1998)Rachev and
  R{\"u}schendorf]{rachev1998mass}
Rachev, S. and R{\"u}schendorf, L.
\newblock \emph{Mass Transportation Problems: Volume I: Theory}.
\newblock Mass Transportation Problems. Springer, 1998.

\bibitem[Rakotomamonjy \& Ralaivola(2021)Rakotomamonjy and
  Ralaivola]{pmlr-v139-rakotomamonjy21a}
Rakotomamonjy, A. and Ralaivola, L.
\newblock Differentially private sliced wasserstein distance.
\newblock In Meila, M. and Zhang, T. (eds.), \emph{Proceedings of the 38th
  International Conference on Machine Learning}, volume 139 of
  \emph{Proceedings of Machine Learning Research}, pp.\  8810--8820. PMLR,
  18--24 Jul 2021.

\bibitem[Salimans et~al.(2018)Salimans, Zhang, A.Radford, and
  Metaxas]{salimans2018improving}
Salimans, T., Zhang, H., A.Radford, and Metaxas, D.
\newblock Improving {GAN}s using optimal transport.
\newblock In \emph{International Conference on Learning Representations}, 2018.

\bibitem[Solomon et~al.(2015)Solomon, de~Goes, Peyr{\'e}, Cuturi, Butscher,
  Nguyen, Du, and Guibas]{solomon2015}
Solomon, J., de~Goes, F., Peyr{\'e}, G., Cuturi, M., Butscher, A., Nguyen, A.,
  Du, T., and Guibas, L.
\newblock Convolutional {W}asserstein distances: Efficient optimal
  transportation on geometric domains.
\newblock \emph{ACM Trans. Graph.}, 34\penalty0 (4):\penalty0 66:1--66:11,
  2015.

\bibitem[Sutherland et~al.(2017)Sutherland, Tung, Strathmann, De, Ramdas,
  Smola, and Gretton]{sutherland2017generative}
Sutherland, D.~J., Tung, H.-Y., Strathmann, H., De, S., Ramdas, A., Smola,
  A.~J., and Gretton, A.
\newblock Generative models and model criticism via optimized maximum mean
  discrepancy.
\newblock In \emph{ICLR (Poster)}, 2017.

\bibitem[Villani(2009)]{villani09optimal}
Villani, C.
\newblock \emph{Optimal Transport: Old and New}, volume 338 of
  \emph{Grundlehren der mathematischen Wissenschaften}.
\newblock Springer Berlin Heidelberg, 2009.

\bibitem[Wu et~al.(2019)Wu, Huang, Acharya, Li, Thoma, Paudel, and
  Gool]{wu2019sliced}
Wu, J., Huang, Z., Acharya, D., Li, W., Thoma, J., Paudel, D.~P., and Gool,
  L.~V.
\newblock Sliced wasserstein generative models.
\newblock In \emph{Proceedings of the IEEE/CVF Conference on Computer Vision
  and Pattern Recognition}, pp.\  3713--3722, 2019.

\end{thebibliography}

\appendix
\section{Additional definitions}

\subsection{Maximum Mean Discrepancy}
Let $k: \mathcal{X}\times \mathcal{X} \rightarrow \R$ be the reproducing kernel
of a reproducing kernel Hilbert space $\mathcal{H}$. The metric on
distance denoted as maximum mean discrepancy between $\mu$ and $\nu$
belonging to $\mathcal{P}(\mathcal{X})$ is defined as:
\begin{align*}
 MMD(\mu,\nu) = \left \| \int k(\cdot,x) d\mu(x) - \int k(\cdot,x) d\nu(x) \right \|_\mathcal{H}.   
\end{align*}
For empirical distributions, one can estimate the MMD using biased
or unbiased formulations as given by \citet{gretton2012kernel}:
For empirical distributions, one can estimate the MMD using biased
or unbiased formulations as given by \citet{gretton2012kernel}:
\begin{align*}
MMD(\hat \mu, \hat \nu) &=  \Bigg[\frac{1}{n^2}\sum_{i,j} k(x_i,x_j)
+ \frac{1}{m^2}\sum_{i,j} k(y_i,y_j)\\
& - \frac{2}{nm}\sum_{i,j} k(x_i,y_j) \Bigg]^{\frac{1}{2}}
\end{align*}

\subsection{Sinkhorn Divergence and Gaussian Smoothed Sliced Sinkhorn Divergence}
Let define the entropic regularized Wasserstein distance \cite{cuturi2013sinkhorn} between distributions $\mu$ and $\nu$ as 

\begin{align*}
	W_{p, \lambda}^p(\mu,\nu)
	&= \inf_{\gamma \in \Pi(\mu,\nu)} \int_{\R^d \times \R^d}
	\|x - y\|^p \gamma(x,y) dxdy \\
	& + \lambda H(\gamma|\mu \otimes \nu).
\end{align*}

where the set $\Pi(\mu,\nu)$ is defined as in Section \ref{sec:back}. The term $H(\cdot|\cdot)
$ is the relative entropy regularization of the transport plan with respect to the product measure $\mu \otimes \nu$, and  is given by 
$$H(\gamma|\mu \otimes \nu) = \iint \log\Big(\frac{\text{d}\gamma(x,y)}{\text{d}\mu \otimes \text{d}\nu(x,y)}\Big)\text{d}\gamma(x,y).$$ 
The related regularization parameter is $\lambda \geq 0$.  Then, the Sinkhorn divergence is defined as
$$
SKD_\lambda (\mu,\nu) = W_{p, \lambda}^p(\mu,\nu) - \frac{1}{2} W_{p, \lambda}^p(\mu,\mu) - \frac{1}{2} W_{p, \lambda}^p(\nu,\nu).
$$

Accordingly the Gaussian Smoothed Sliced Sinkhorn Divergence is expressed as
\begin{align*}
		G_\sigma& {SKD}_{p,\lambda}^p(\mu,\nu)\\
	&= \int_{\mathbb{S}^{d-1}}  SKD_{\lambda}^p(\mathcal{R}_\u \mu * \mathcal{N}_\sigma,\mathcal{R}_\u \nu* \mathcal{N}_\sigma) u_d(\u)d\u.
\end{align*}

\section{Proofs}

\subsection{Proof of Theorem~\ref{theorem:proof_topology}} \label{sub:proof_of_theorem_theorem:proof_topology}
$\bullet$ {\it Non-negativity (or symmetry).} The non-negativity (or symmetry) follows directly from the non-negativity (or symmetry) of $D^p$, see Definition 3.\\
$\bullet$ {\it Identity  property.} For the identity property, if the base divergence $D^p$ satisfies the identity property in one dimenstional measures, then for any $\mu \in \mathcal{P}(\R^d)$ and $\u \in \mathbb{S}^{d-1}$, one has that $D^p(\mathcal{R}_\u \mu * \mathcal{N}_\sigma,\mathcal{R}_\u \mu* \mathcal{N}_\sigma) =0,$ hence, by Definition 3, $G_\sigma SD^p(\mu,\mu) = 0.$
Let us now prove the fact that for any $\mu, \nu \in \mathcal{P}(\R^d), G_\sigma SD^p(\mu,\mu) =0$ entails $\mu = \nu$ a.s.
On one hand, $G_\sigma SD^p(\mu,\mu) = 0$ gives the fact that $D^p(\mathcal{R}_\u \mu * \mathcal{N}_\sigma,\mathcal{R}_\u \nu* \mathcal{N}_\sigma) = 0$ for $u_d$-almost every $\u \in \mathbb{S}^{d-1},$ hence $\mathcal{R}_\u \mu * \mathcal{N}_\sigma = \mathcal{R}_\u \nu* \mathcal{N}_\sigma$ for $u_d$-almost every $\u \in \mathbb{S}^{d-1}.$ Following the techniques in proof of Proposition 5.1.2 in~\cite{bonnotte:tel-00946781}, for any measure $\eta \in \mathcal{P}(\R^m)$ (with $m\geq 1$), $\mathcal{F}[\eta](\cdot)$ stands for the Fourier transform of $\s$ and is given as $\mathcal{F}[\eta](\v) = \int_{\R^m} e^{-i\s^\top \v}\text{d}\eta(\v)$ for any $\v \in \R^m.$ Then 
\begin{align*}
\mathcal{F}[\mathcal{R}_\u \mu * \mathcal{N}_\sigma](v) &= \int_\R e^{-i vt} \text{d}(\mathcal{R}_\u \mu * \mathcal{N}_\sigma)(t)\\
&=\int_{\R}\int_{\R}e^{-i(r+t)v}\text{d}\mathcal{R}_\u\mu(r) \text{d}\mathcal{N}_\sigma(t)\\
&= \int_{\R^d}\int_{\R} e^{-i(\langle \u, s\rangle+t)v}
\text{d}\mu(\s) \text{d}\mathcal{N}_\sigma(t)\\
&= \int_{\R} e^{-itv}\text{d}\mathcal{N}_\sigma(t) \int_{\R^d} e^{-i(\langle \u, s\rangle)v}
\text{d}\mu(\s)\\
&= \mathcal{F}[\mathcal{N}_\sigma](v)\mathcal{F}[\mu](v\u).
\end{align*}
Since for $u_d$-almost every $\u\in \mathbb{S}^{d-1}, \mathcal{R}_\u \mu * \mathcal{N}_\sigma = \mathcal{R}_\u \nu* \mathcal{N}_\sigma$, and hence $\mathcal{F}[\mathcal{R}_\u \mu * \mathcal{N}_\sigma] = \mathcal{F}[\mathcal{R}_\u \nu* \mathcal{N}_\sigma]
\Leftrightarrow \mathcal{F}[\mathcal{N}_\sigma]\mathcal{F}[\mu] = \mathcal{F}[\mathcal{N}_\sigma]\mathcal{F}[\nu]\Leftrightarrow \mathcal{F}[\mu] = \mathcal{F}[\nu]$. Since the Fourier transform is injective, we conclude that $\mu=\nu.$\\
$\bullet${\it Triangle inequality.} Assume that $D^p$ is a metric and let $\mu, \nu, \eta \in \mathcal{P}(\R^d).$ We then have  
\begin{align*}
&G_\sigma SD(\mu,\nu)\\
& = \Big\{\int_{\mathbb{S}^{d-1}}D^p(\mathcal{R}_\u\mu * \mathcal{N}_\sigma,\mathcal{R}_\u \nu* \mathcal{N}_\sigma) u_d(\u)\text{d} \u\Big\}^{1/p}\\
&\leq \Big\{\int_{\mathbb{S}^{d-1}}\Big(D(\mathcal{R}_\u\mu * \mathcal{N}_\sigma,\mathcal{R}_\u \eta* \mathcal{N}_\sigma)\\
&\qquad \qquad   + D(\mathcal{R}_\u\eta * \mathcal{N}_\sigma,\mathcal{R}_\u \nu* \mathcal{N}_\sigma)\Big)^pu_d(\u)\text{d} \u\Big\}^{1/p}\\
&\underbrace{\leq}_{(\star)} \Big\{\int_{\mathbb{S}^{d-1}}\Big(D^p(\mathcal{R}_\u\mu * \mathcal{N}_\sigma,\mathcal{R}_\u \eta* \mathcal{N}_\sigma)u_d(\u)\text{d} \u\Big\}^{1/p}\\
&\qquad \quad   + \Big\{\int_{\mathbb{S}^{d-1}} D^p(\mathcal{R}_\u\eta * \mathcal{N}_\sigma,\mathcal{R}_\u \nu* \mathcal{N}_\sigma)\Big)^pu_d(\u)\text{d} \u\Big\}^{1/p}\\
&= G_\sigma SD(\mu,\eta) + G_\sigma SD(\eta,\nu),
\end{align*}
where inequality in $(\star)$ follows from the application of Minkowski inequality.

\subsection{Proof of Proposition~\ref{corollary:sampComplexGSWD}} \label{sec:proof_of_corollary:sampcomplexgswd_}
Let us first upper bound the $k$-th moment of $M_k(\mathcal{R}_\u \mu * \mathcal{N}_\sigma)$, for all $k\geq 1.$
For all $\u\in \mathbb{S}^{d-1} $, one has 
\begin{align*}
M_k(\mathcal{R}_\u \mu * \mathcal{N}_\sigma)
&= \int_{\R}|t|^k \text{d}(\mathcal{R}_\u \mu * \mathcal{N}_\sigma)(t)\\
&= \int_{\R}\int_{\R}|r+t|^k \text{d}\mathcal{R}_\u\mu(r) \text{d}\mathcal{N}_\sigma(t)\\
&= \int_{\R^d}\int_{\R}|\langle \u, s\rangle+t|^k \text{d}\mu(\s) \text{d}\mathcal{N}_\sigma(t).
\end{align*}
Using the elementary inequality $(a+b)^k \leq 2^{k-1}(a^k + b^k)$ for $k\geq 1, a \geq 0,$ and $b\geq 0$, we obtain
\begin{align*}
&M_k(\mathcal{R}_\u \mu * \mathcal{N}_\sigma)\\
&\leq 2^{k-1}\int_{\R^d}\int_{\R}\big(|\langle \u, s\rangle|^k +|t|^k\big)\text{d}\mu(s) \text{d}\mathcal{N}_\sigma(t)\\
&\leq 2^{k-1}\Big(\norm{\u}\int_{\R^d}\norm{s}^k\text{d}\mu(s) + \int_{\R}|t|^k\text{d}\mathcal{N}_\sigma(t)\Big)\\
&\leq 2^{k-1}\Big(\int_{\R^d}\norm{s}^k\text{d}\mu(s) + \int_{\R}|t|^k\text{d}\mathcal{N}_\sigma(t)\Big)\\
&=2^{k-1}(M_k(\mu)) + M_k(\mathcal{N}_\sigma)).
\end{align*}
We then use the following result:
\begin{lemma}[see proof of Theorem 1 in~\cite{fournier2015}]
\label{lemma:thm1-fournier}
Let $\eta \in \mathcal{P}(\R)$ and let $p\geq 1$. Assume that $M_q(\eta)<\infty$ for some $q>p.$ There exists a constant $C_{p,q}$ depending only on $p,q$ such that, for all $n\geq 1,$
\begin{align*}
\E[W^p_p(\hat\eta_n, \eta)] \leq C_{p,q}M_q(\eta)^{p/q}\begin{cases}
n^{-1/2}{\bf{1}}_{q >2p }, \\
n^{-1/2}\log(n){\bf{1}}_{q=2p}\\
n^{-(q-p)/q}{\bf{1}}_{q \in (p ,2p)}.\end{cases}
\end{align*}
\end{lemma}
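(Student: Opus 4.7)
The plan is to follow the strategy of Fournier--Guillin (2015), specialized to $d=1$, by combining truncation, dyadic discretization, Bernstein-type concentration, and a final optimization. Working in one dimension is the key simplification, since the cumulative distribution function $F_\eta$ and the identity $W_p^p(\mu,\nu)=\int_0^1|F_\mu^{-1}(u)-F_\nu^{-1}(u)|^p\,du$ give direct access to the Wasserstein cost.

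First, I would truncate $\eta$ to a compact interval $[-R,R]$ with $R$ to be chosen later. The tail contribution to $\mathbf{E}[W_p^p(\hat\eta_n,\eta)]$ is controlled by terms of the form $R^p\,\eta(|X|>R)+\int_{|x|>R}|x|^p\,d\eta(x)$ plus the analogous empirical quantity; both are $O(M_q(\eta)\,R^{p-q})$ by Markov's inequality, which is integrable since $q>p$. This isolates the bounded part, where empirical-process tools apply.

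Second, on $[-R,R]$ I would partition dyadically into intervals $I$ and bound $W_p^p(\hat\eta_n^R,\eta^R)$ by a weighted sum of mass discrepancies $|\hat\eta_n(I)-\eta(I)|$, each weighted by an appropriate power of the interval diameter. Since $n\hat\eta_n(I)$ is Binomial$(n,\eta(I))$, Bernstein's inequality gives $\mathbf{E}|\hat\eta_n(I)-\eta(I)|\lesssim \sqrt{\eta(I)/n}+1/n$. Summing across intervals at one scale, using Cauchy--Schwarz on the $\sqrt{\eta(I)}$ factors together with $\sum_I\eta(I)=1$, and then summing across $O(\log R)$ dyadic scales, yields a discretization error of order $R^{p}n^{-1/2}$ in the best regime.

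Third, I would balance the truncation error $M_q(\eta)R^{p-q}$ against the discretization error $R^{p}n^{-1/2}$ by setting them of the same order, which fixes $R=R(n,p,q)$ and produces the three regimes naturally: when $q>2p$, the chosen $R$ grows slowly enough that the dyadic sum converges geometrically and the rate $n^{-1/2}$ is preserved; at the critical value $q=2p$ the geometric series is replaced by a harmonic one, producing the extra $\log n$; and for $q\in(p,2p)$ the optimal $R$ grows too fast, and the balance gives the slower rate $n^{-(q-p)/q}$. The moment $M_q(\eta)$ enters the constant exactly with exponent $p/q$, from substituting $R\asymp (M_q(\eta))^{1/q}\cdot n^{\beta}$ into $R^p$.

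The main obstacle will be the discretization step: translating $W_p^p$ into a controllable linear combination of mass discrepancies on dyadic intervals. For $p=1$ the formula $W_1=\int_\R|F_\mu-F_\nu|\,dx$ makes this immediate, but for general $p>1$ one must either invoke a Bobkov--Ledoux type inequality involving $[F_\eta(1-F_\eta)]^{p/2}$ or construct an explicit coupling via the quantile functions and bound increments scale by scale; in either case, tracking the constants so that the moment $M_q(\eta)$ enters only with the exponent $p/q$ through the truncation--optimization procedure is the delicate point of the argument.
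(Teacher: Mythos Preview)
The paper does not prove this lemma at all: it is stated as a direct citation of Theorem~1 in Fournier--Guillin (2015), specialized to dimension $d=1$, and is then used as a black box in the proof of Proposition~\ref{corollary:sampComplexGSWD}. There is no argument to compare against beyond the reference.

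Your sketch is a faithful outline of the actual Fournier--Guillin proof, and the overall architecture (truncation, dyadic multiscale coupling, variance bound on binomial mass discrepancies, balancing $R$ against $n$) is correct. Two minor comments: the concentration step does not need Bernstein's inequality---the elementary bound $\E|\hat\eta_n(I)-\eta(I)|\leq \sqrt{\eta(I)(1-\eta(I))/n}$ from Cauchy--Schwarz on the variance suffices; and the discretization of $W_p^p$ for $p>1$ is handled in Fournier--Guillin not via Bobkov--Ledoux but by an explicit dyadic transport construction (their Lemma~5/6), which directly produces the weighted sum of mass discrepancies you describe. Since the paper only needs the statement and invokes it by citation, your reconstruction goes well beyond what is required here.
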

Let us fix $\mu \in \mathcal{P}_q(\R^d)$ with $q>p\geq1$ an empirical measure $\hat \mu_n$. Then, one has 
\begin{equation*}
M_q(\mathcal{R}_\u \mu * \mathcal{N}_\sigma) \leq 2^{q-1}(M_q(\mu)) + M_q(\mathcal{N}_\sigma)) < \infty.
\end{equation*}
By Lemma~\ref{lemma:thm1-fournier}, we obtain
\begin{align*}
\E&[G_\sigma SWD_p^p( \hat \mu_n,\mu)]\\
& = \E\bigg[\int_{\mathbb{S}^{d-1}}W_p^p(\mathcal{R}_\u \hat\mu_n * \mathcal{N}_\sigma,\mathcal{R}_\u \mu* \mathcal{N}_\sigma) u_d(\u)\text{d} \u \bigg]\\
&\leq C_{p,q}\begin{cases}
n^{-1/2}{\bf{1}}_{q >2p }, \\
n^{-1/2}\log(n){\bf{1}}_{q=2p}\\
n^{-(q-p)/q}{\bf{1}}_{q \in (p ,2p)}.\end{cases} \\
&\qquad  \times \int_{\mathbb{S}^{d-1}} M_q(\mathcal{R}_\u \mu * \mathcal{N}_\sigma)^{p/q} u_d(\u)\text{d} \u\\
&\leq C_{p,q}\begin{cases}
n^{-1/2}{\bf{1}}_{q >2p }, \\
n^{-1/2}\log(n){\bf{1}}_{q=2p}\\
n^{-(q-p)/q}{\bf{1}}_{q \in (p ,2p)}.\end{cases} \\
&  \times \begin{cases} 
\big(2^{q-1}(M_q(\mu)) + M_q(\mathcal{N}_\sigma))\big)^{p/q} {\bf 1}_{q \in 2\mathbb{N}^*},
 \\
\big(2^{q-1}(M_q(\mu))\big)^{p/q}{\bf 1}_{q \in 2\mathbb{N}+1}
.
\end{cases}
\end{align*}
On the other hand, since $W_p(\cdot, \cdot)$ is a metric, by applying Theorem~\ref{theorem:sample_complexity}, we obtain the following:
\begin{align*}
\E&[|G_\sigma SWD^p( \hat \mu_n,\hat \nu_n) - G_\sigma SWD^p(\mu,\nu)|]\\
&\leq 2 C_{p,q}\begin{cases}
n^{-1/2}{\bf{1}}_{q >2p }, \\
n^{-1/2}\log(n){\bf{1}}_{q=2p}\\
n^{-(q-p)/q}{\bf{1}}_{q \in (p ,2p)}.\end{cases} \\
& \times
\begin{cases}
\big(2^{q-1}(M_q(\mu, \nu)) + M_q(\mathcal{N}_\sigma))\big)^{p/q} {\bf 1}_{q \in 2\mathbb{N}^*}
 \\
\big(2^{q-1}(M_q(\mu, \nu))\big)^{p/q}{\bf 1}_{q \in 2\mathbb{N}+1}.
\end{cases}
\end{align*}

\subsection{Proof of Theorem~\ref{theorem:error_MC}} \label{sec:proof_of_theorem_theorem:error_mc}

Using Holder's inequality, we have 
\begin{align*}
&\E_{\u \sim u_d}\big[\big|\widehat{G_\sigma SD}^p(\mu,\nu) - {G_\sigma SD}^p(\mu,\nu)\big|\big]\\
&\leq \Big(\E_{\u \sim u_d}\big[\big|\widehat{G_\sigma SD}^p(\mu,\nu) - {G_\sigma SD}^p(\mu,\nu)\big|^2\big]\Big)^{1/2}\\
&= \Big(\V_{\u \sim u_d}\big[\big|\widehat{G_\sigma SD}^p(\mu,\nu)\big|\big]\Big)^{1/2}\\
&= \Big(\V_{\u \sim u_d}\big[\big|{G_\sigma SD}^p(\mu,\nu)\big|\big]\Big)^{1/2}\\
&= \frac{A(p,\sigma)}{\sqrt{L}}.
\end{align*}

\subsection{Proof of Proposition~\ref{proposition:GS-SWD_sigma_1_2}} \label{sub:proof_of_proposition_proposition:gs-swd_sigma_1_2}

The proof follows the same lines in proof of Lemma 1 in~\cite{pmlr-v139-nietert21a}.
First, we have that $\mathcal{N}_{\sigma_2} = \mathcal{N}_{\sigma_1} * \mathcal{N}_{\sqrt{\sigma_2^2 - \sigma_1^2}}$. Setting the following random variables: $X_\u\sim \mathcal{R}_\u \mu, Y_\u\sim\mathcal{R}_\u \nu, Z_X \sim \mathcal{N}_{\sigma_1}, Z_Y \sim \mathcal{N}_{\sigma_1}, Z'_X \sim \mathcal{N}_{\sqrt{\sigma_2^2 - \sigma_1^2}}, Z'_Y \sim \mathcal{N}_{\sqrt{\sigma_2^2 - \sigma_1^2}}$. The sliced Wasserstein distance $\text{W}_p^p(\mathcal{R}_\u \mu * \mathcal{N}_{\sigma_2},\mathcal{R}_\u \nu* \mathcal{N}_{\sigma_2})$ is given as a minimization over couplings $(X_\u, Z_X, Z'_X)$ and $(Y_\u, Z_Y, Z'_Y)$. Using the inequality $\E[|X|^p] - 2^{p-1}\E[|Y|^p]\leq 2^{p-1}\E[|X+Y|^p]$ for any random variables $X, Y \in \mathbb{L}_p$ integrable, we obtain, 
\begin{align*}
&2^{p-1}\E\big[|(X_\u + Z_X) - (Y_\u + Z_Y) + (Z'_X + Z'_Y)|^p\big]\\
&\geq \E\big[|(X_\u + Z_X) - (Y_\u + Z_Y) |^p\big]\\
&\qquad -2^{p-1}\E\big[|(Z'_X + Z'_Y)|^p\big]\big).
\end{align*}
Hence,
\begin{align*}
&2^{p-1}\text{W}^p_p(\mathcal{R}_\u \mu * \mathcal{N}_{\sigma_2},\mathcal{R}_\u \nu* \mathcal{N}_{\sigma_2})\\
&\geq \inf\Big(\E\big[|(X_\u + Z_X) - (Y_\u + Z_Y) |^p\big]\\
&\qquad -2^{p-1}\E\big[|(Z'_X + Z'_Y)|^p\big]\big)\Big)\\
&\geq \text{W}^p_p(\mathcal{R}_\u \mu * \mathcal{N}_{\sigma_1},\mathcal{R}_\u \nu* \mathcal{N}_{\sigma_1})\\
&\qquad - 2^{p-1}\sup\E\big[|(Z'_X + Z'_Y)|^p\big]\\
&\geq \text{W}^p_p(\mathcal{R}_\u \mu * \mathcal{N}_{\sigma_1},\mathcal{R}_\u \nu* \mathcal{N}_{\sigma_1}) - 2^p \sup\E\big[|(Z'_X)|^p\big].
\end{align*}
Therefore,
\begin{align*}
2^{p-1} G_{\sigma_2} SWD_p^p(\mu,\nu) &\geq G_{\sigma_1} SWD_p^p(\mu,\nu)|\\
&\qquad - 2^p u_d(\mathbb{S}^{d-1})\sup\E\big[|(Z'_X)|^p\big],
\end{align*}
hence,
\begin{align*}
G_{\sigma_1} SWD_p^p(\mu,\nu)| &\leq 2^{p-1} G_{\sigma_2} SWD_p^p(\mu,\nu)\\
& \qquad + 2^p u_d(\mathbb{S}^{d-1})\sup\E\big[|(Z'_X)|^p\big].
\end{align*}
Recall that if $Z\sim \mathcal{N}_{\sigma}$
\begin{align*}
\E[|Z|^p] = \frac{2^p\Gamma((p+1)/2)}{\Gamma(1/2)} \sigma^{2p} \leq 2^{p/2} \sigma^{2p}.	
\end{align*}
and $u_d(\mathbb{S}^{d-1}) = \frac{2\pi^{d/2}}{\Gamma(d/2)}$ then 
\begin{align*}
	G_{\sigma_1} SWD_p^p(\mu,\nu) &\leq 2^{p-1} G_{\sigma_2} SWD_p^p(\mu,\nu)+ \frac{2\pi^{d/2}}{\Gamma(d/2)}2^{3p/2} (\sigma_2^2 - \sigma_1^2)^{2p}.
\end{align*}

\end{document}